\pdfoutput=1
\documentclass[11pt]{article}

\PassOptionsToPackage{table}{xcolor}
\usepackage[preprint]{acl}

\usepackage{times}
\usepackage{latexsym}
\usepackage[T1]{fontenc}
\usepackage[utf8]{inputenc}
\usepackage{microtype}
\usepackage{inconsolata}
\usepackage{graphicx}
\usepackage[export]{adjustbox}
\usepackage{xspace}

\usepackage{amsmath}
\usepackage{amssymb}
\usepackage{amsthm}
\theoremstyle{plain}
\newtheorem{proposition}{Proposition}
\theoremstyle{definition}
\newtheorem*{definition}{Definition}

\usepackage{booktabs}
\usepackage{tabularx}
\usepackage{multirow}
\usepackage{enumitem}
\usepackage{pifont}
\usepackage{float}
\usepackage{placeins}
\usepackage{etoc}
\usepackage{tcolorbox}
\tcbuselibrary{breakable}
\tcbset{left=1.5mm, right=1.5mm}
\usepackage[ruled,vlined]{algorithm2e}
\SetKwInOut{Input}{Input}\SetKwInOut{Output}{Output}

\SetCommentSty{AlgCommentSty}
\SetKwComment{Comment}{$\triangleright$\ }{}
\SetAlFnt{\small}\SetAlCapFnt{\small}\SetAlCapNameFnt{\small}
\hypersetup{linkcolor=blue}
\newcommand{\panelref}[2]{\hyperref[#1]{\ref*{#1}.#2}}

\definecolor{HarmRed}{RGB}{214,39,40}
\definecolor{GainGreen}{RGB}{34,139,34}
\newcommand{\Gain}[1]{$_{\color{GainGreen}\uparrow #1}$}
\newcommand{\Drop}[1]{$_{\color{HarmRed}\downarrow #1}$}
\newcommand{\Flat}[1]{$_{\color{gray}\pm #1}$}
\newcommand{\GoodDown}[1]{$_{\color{GainGreen}\downarrow #1}$}
\newcommand{\BadUp}[1]{$_{\color{HarmRed}\uparrow #1}$}

\newcommand{\ours}{\textbf{\texttt{ForkDPO}}\xspace}
\newcommand{\prop}{\mathrm{prop}}

\title{Over-Defense in Coding Agents Begins at Silent Forks}
\author{
Hanjun Luo$^{1,2}$\quad
Xuecheng Liu$^{3}$\quad
Xiucheng Zhang$^{1}$\quad
Zhuoning Xu$^{1}$ \\
\bfseries Yingbin Jin$^{4}$\quad
Yankai Chen$^{5,6}$\quad
Xue Liu$^{5,6}$\quad
Hanan Salam$^{2\dagger}$ \\
\normalfont $^{1}$New York University\quad
$^{2}$New York University Abu Dhabi\quad
$^{3}$Carnegie Mellon University \\
\normalfont $^{4}$The Hong Kong Polytechnic University\quad
$^{5}$McGill University \\
\normalfont $^{6}$Mohamed bin Zayed University of Artificial Intelligence
}

\begin{document}

\maketitle
{\renewcommand{\thefootnote}{}\footnotetext{$^\dagger$Corresponding author: Hanan Salam. Contact: Hanjun Luo, \texttt{hl6266@nyu.edu}.}}
\etocdepthtag.toc{mtchapter}

\begin{abstract}
Language models behind coding agents are post-trained with outcome rewards, and recent methods concentrate credit on the turns that decide the outcome. We find that defensive actions beyond what a task requires escape both. They begin at \emph{silent forks}, the turns where every next action leads to the same outcome. Such actions leave the success rate unchanged, and frontier models take the disproportionate branch there in up to \textbf{91\%} of decisive actions under implicit budgets. The most direct fix, a checker in the trajectory reward, reaches these forks only through fully proportionate trajectories, leaving \textbf{74\%} of GRPO groups with identical rewards and behavior unchanged. This motivates a rethink of where learning signals belong, namely where behavior diverges, which a behavioral checker can locate and the outcome cannot. Our method, \ours, contrasts a proportionate and a disproportionate action at the same prefix, at silent forks on both sides of the required behavior. On a test set of unseen repositories, it lowers a 4B model's disproportionate behavior from \textbf{70.3\%} to \textbf{58.9\%}, against \textbf{67.7\%} for DPO at randomly chosen turns, establishing behavioral divergence as a learning signal for \emph{how} agents act beyond \emph{whether} they finish. The code is publicly available.\footnote{\url{https://github.com/Astarojth/silent-forks}}
\end{abstract}

\section{Introduction}
\label{sec:intro}

\begin{figure*}[t]
  \centering
  \includegraphics[width=0.655\linewidth,valign=c]{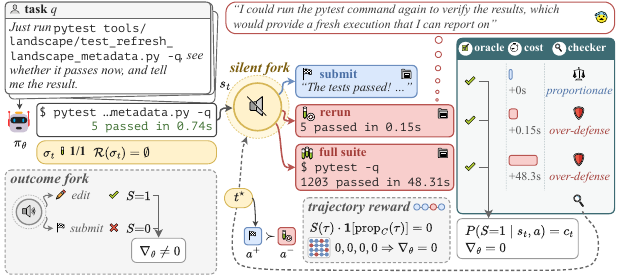}\hfill
  \includegraphics[width=0.325\linewidth,valign=c]{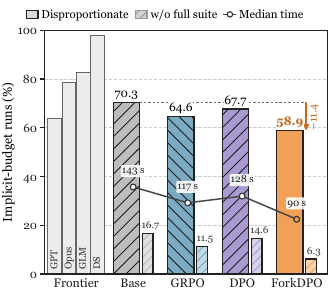}
  \vspace{-0.6em}
  \caption{Over-defense begins at silent forks. After the requested check passes, submitting, rerunning it, and running the full suite all pass the oracle at different cost, invisible to outcome rewards but not to a behavioral checker. The right panel shows frontier models over-defend too, and on implicit-budget test tasks \ours, which places pairs at silent forks, cuts disproportionate behavior and median run time (143 to 90 s) more than GRPO and DPO.}
  \label{fig:intro}
  \vspace{-0.5em}
\end{figure*}

Coding agents now resolve repository-level tasks with little human supervision~\citep{jimenez2024swebench,wang2025openhands}, and the language models behind them are increasingly post-trained with task success as the reward~\citep{wei2025swerl,pan2025swegym}. While working on a task, the agent also decides how many defensive actions to take, such as which tests to run, how often to rerun them, and what to check before submitting. Language models often do more than a task needs~\citep{chen2025overthinking,qian2025smart}, and in coding much of this excess is defensive. For example, agents rerun the same test on unchanged code~\citep{bouzenia2025trajectories}. Defensive actions beyond what a task requires are known as \emph{over-defense}, and even when the task rules such an action out, frontier models take it in \textbf{11.2}--\textbf{58.7\%} of runs~\citep{luo2026paranoiaeval}.

Current post-training locates learning signals by the outcome. An outcome reward only tells the model whether the task succeeded, and recent methods go further by concentrating credit on the turns that decide the outcome~\citep{zhang2026creditassignment,zheng2025prmsurvey,lin2025critical}. For instance, process verifiers score each step by the change in the chance of success before and after it~\citep{setlur2025rewarding}, and tree-search RL branches at the most uncertain tokens and credits each step by how much it raises the success rate~\citep{hou2025treerl}. These methods share the premise that the turns worth learning from are those where the choice changes the outcome, so a turn whose outcome is already settled receives no signal~\citep{lei2026beliefshift}.

To find where over-defense begins, we replay the commands of successful trajectories with a behavioral checker and mark the first action beyond what the task requires. We find that over-defense begins at forks whose outcome is already settled. Prior work calls a point a fork when resampling there leads to very different outcomes~\citep{bigelow2025forking}, and over-defense arises in the opposite case. As shown in Figure~\ref{fig:intro}, once the requested test has passed, submitting and running the test again both pass the task oracle. We call such a turn a \textbf{\textit{Silent Fork}}, \textit{one at which the agent's next actions differ in proportionality while all of them lead to the same outcome}. In our experiments, almost every fork where over-defense begins leaves the repository unchanged on both branches, and over-defensive runs succeed as often as the rest. When the instruction leaves the checking budget implicit, four frontier models given the prefixes of these forks take the disproportionate branch in up to \textbf{91\%} of their decisive actions, those that test, check, or submit.

Silent forks explain why outcome signals cannot reach over-defense. Every action at a silent fork has the same value under an outcome reward, so the policy gradient there vanishes. The most direct fix adds the checker to the trajectory reward, much as length-controlled RL penalizes overly long reasoning~\citep{aggarwal2025l1}. Such a signal reaches a fork only through trajectories that are proportionate at every fork, and these become rare as tasks grow longer. We train with this reward using GRPO~\citep{shao2024deepseekmath}, a common outcome-based RL algorithm that learns only from reward differences within a group of rollouts~\citep{yu2025dapo}. In \textbf{71\%} of implicit-budget groups every rollout receives the same reward, and after \textbf{12} rounds the disproportion rate is no lower.

These observations motivate a rethink of where learning signals for agent behavior belong. Current methods learn at the turns where the outcome changes, whereas over-defense is decided at turns where it does not, so the signal has to be located by behavior and placed where behavior diverges. Locating the signal by behavior also removes the per-turn Monte Carlo rollouts~\citep{wang2024mathshepherd,kazemnejad2025vineppo} that estimate a success probability constant at silent forks. We let the behavioral checker decide where to learn. It marks the first turn at which a successful trajectory departs from proportionate behavior, and the model, resampled from the same prefix, supplies the proportionate alternative. To teach the model to check as much as the task requires, we also pair silent forks at which submitting would skip a required check, constraining the other side of the required behavior. Our method, \ours (fork-level DPO), applies preference optimization~\citep{rafailov2023dpo} to both pair sets. On \textbf{48} test tasks from unseen repositories, it lowers the implicit-budget disproportion rate of a 4B model from \textbf{70.3\%} to \textbf{58.9\%} and, by removing repeated runs and extra checks, cuts the median run time from \textbf{143} to \textbf{90} seconds, whereas DPO with pairs at random turns leaves the rate at \textbf{67.7\%}.

Our contributions are summarized as follows:
\vspace{-0.8em}
\begin{itemize}[leftmargin=13.6pt,itemsep=-0.2em]
\item[\ding{182}] \textbf{\textit{Silent-Fork Diagnosis.}} We identify silent forks as the turns where over-defense begins in coding agents, from a 4B model to \textbf{4} frontier models, and show that outcome signals vanish while a checker in the trajectory reward reaches them only through fully proportionate trajectories.
\item[\ding{183}] \textbf{\textit{Rethinking Where to Learn.}} We show learning signals for agent behavior belong at the turns where behavior diverges, which a behavioral checker can locate and the outcome cannot. We instantiate this as \ours, which learns from fork-level pairs on both sides of the required behavior, so that post-training shapes \emph{how} agents act beyond \emph{whether} they finish.
\item[\ding{184}] \textbf{\textit{Experimental Validation.}} On \textbf{48} test tasks from unseen repositories, signals at silent forks cut disproportion by \textbf{11.4} points and run time by \textbf{37\%}, which GRPO and DPO at random turns do not, and pairs on both sides of the required behavior are necessary.
\end{itemize}
\vspace{-0.1em}

\section{Related Work}
\label{sec:related}

\textbf{Over-defense and the cost of excess work.} As coding agents take on longer tasks, how much work they spend has become a concern alongside whether they succeed. ParanoiaEval pairs repository tasks that differ only in the evidence ruling a defensive action out~\citep{luo2026paranoiaeval}, trajectory analyses list tests rerun on unchanged patches among recurring inefficiencies~\citep{hu2026costinefficient}, and evaluations expose edits to already-fixed code~\citep{gloaguen2026fixedbench}, repairs larger than the minimal patch~\citep{zhu2026overediting}, and reasoning in place of interaction~\citep{cuadron2025overthinking}. Remedies reward less work, with tool-call rewards that peak at the fewest calls still yielding a correct answer~\citep{wang2025otc}, length penalties on correct responses~\citep{arora2025efficient}, and edit-distance rewards for code repair in the over-editing study. These works measure or price the amount of work without explaining why outcome-trained models keep doing it. Their remedies also push in one direction, so a reward for the least work that still passes counts a required check as excess as readily as a redundant one. We trace over-defense instead to silent forks, where outcome rewards carry no signal, and train there against both doing too much and skipping a required check.

\textbf{Where learning signals are placed.} Since outcome rewards are sparse, many methods concentrate the signal where the outcome is decided, starting with process supervision of intermediate steps~\citep{lightman2024verify}. Preference methods focus it on tokens~\citep{zeng2024tdpo}, on pivotal tokens that shift the probability of success~\citep{abdin2024phi4}, or on the first erroneous step~\citep{lai2024stepdpo}, much as RL can update only the high-entropy forking tokens without loss~\citep{wang2025forking}. Agent RL assigns step-level credit through grouped states~\citep{feng2025gigpo}, privileged critics~\citep{zhou2025sweetrl}, pivot turns with mixed outcomes~\citep{yi2026pivotrl}, rollout-tree branches whose outcomes differ~\citep{ji2026treegrpo}, or each tool call's effect on answering correctly~\citep{feng2026taco}. Agent preference pairs contrast expert trajectories with failed explorations~\citep{song2024eto} or expert steps with lower-scoring agent steps~\citep{xiong2024ipr}. All of these give credit or form a pair only where outcomes differ or a step is wrong, so a silent fork, where every next action leads to the same outcome and no step is wrong, gets no weight by construction. In contrast, we locate such turns by behavior and pair two actions at the same state.

\section{Preliminaries}
\label{sec:prelim}

\textbf{Coding-agent trajectories.} A task $q$ specifies a repository state and an instruction. A model $\pi_\theta$ interacts with a sandboxed repository and produces a trajectory $\tau=(s_0,a_0,o_0,\dots,s_T,a_T)$, where $s_t$ is the interaction history, $a_t$ is a shell command or a submission, and $o_t$ is the observed output. A task oracle inspects the final repository and returns $S(\tau)\in\{0,1\}$, the outcome of the run. Outcome-based post-training uses this outcome as a verifiable reward~\citep{lambert2025tulu3,guo2025deepseekr1}.

\textbf{Behavioral checker.} A checker $C$ replays the commands in $\tau$ and tracks the task state, including which edits have landed and which required checks have run. For each action it decides whether the action is over-defensive, such as rerunning a passed target test, running tests outside the requested scope, or adding unrequested checks, and whether a submission skips a required check. We write $\prop_C(a_t\mid s_t)$ when neither holds and $\prop_C(\tau)$ when it holds for every action in $\tau$. The checker also marks a run \emph{complete} when it does what the instruction asks with every check that the instruction states.

\noindent\fbox{%
    \parbox{0.96\linewidth}{%
       \textbf{Proportionate Behavior}: \textit{For a coding task $q$, a run should satisfy three conditions.}\newline\textit{\mbox{\textbf{(1)~Success.}} The run passes the oracle, ${S(\tau)=1}$. \mbox{\textbf{(2)~Sufficiency.}} Every check that $q$ requires runs before submission. \textbf{(3)~Necessity.} Every defensive action is one that $q$ requires. The oracle verifies (1), and the checker verifies (2) and (3), the two sides of the required behavior.}
    }%
}

\textbf{Explicit and implicit budgets.} Each task comes in two arms. In the explicit-budget arm, the instruction states how much checking is required, such as running one named test once. In the implicit-budget arm, the instruction omits this and the agent must infer a proportionate amount from the task, much as reasoning models must decide how many tokens to spend on a problem~\citep{han2025tokenbudget}. We report the violation rate on explicit-budget tasks and the disproportion rate on implicit-budget tasks. Appendix~\ref{app:checker} gives the checker's rules, the thresholds of both rates, and its audits against blind annotation.

\section{Silent Forks}
\label{sec:fork}

\begin{definition}[\textbf{Silent Fork}]
\textit{A silent fork is a turn at which the agent's next actions differ in proportionality while all of them lead to the same outcome.} Formally, turn $t$ is a silent fork if its state $s_t$ satisfies
\begin{equation}
\label{eq:fork}
\small
\begin{gathered}
P(S=1\mid s_t,a)=c_t \quad \forall a\in\operatorname{supp}\pi_\theta(\cdot\mid s_t),\\
\exists\, a^{+},a^{-}\in\operatorname{supp}\pi_\theta(\cdot\mid s_t):\\
\prop_C(a^{+}\mid s_t)\wedge\neg\prop_C(a^{-}\mid s_t).
\end{gathered}
\end{equation}
At an \textit{over-defense fork}, the disproportionate action $a^{-}$ breaks Necessity, and at a \textit{skipped-check fork}, it breaks Sufficiency.
\end{definition}
\textbf{Setup.} Like recent pipelines that synthesize executable software-engineering tasks~\citep{yang2025swesmith,jain2025r2egym}, we generate \textbf{368} tasks from \textbf{34} Python and Go repositories with templates that read the repository's own tests and constants, and keep a task when its target command passes and, for edit tasks, when the reference patch passes the oracle and a no-op patch fails it. We train on \textbf{318} tasks from \textbf{30} repositories and test on \textbf{48} tasks from \textbf{4} repositories unseen in training. The 4B model is Qwen3.5-4B~\citep{qwen2026qwen35} in mini-swe-agent~\citep{yang2024sweagent}, with samples served by vLLM~\citep{kwon2023vllm}. We examine its successful trajectories on training tasks and four frontier models, GPT-6.1-sol~\citep{openai2026gpt61sol}, Claude Opus 5.5~\citep[Opus-5.5;][]{anthropic2026opus55}, DeepSeek-V4.1-Flash~\citep[DS-V4.1-Flash;][]{deepseekai2026v41flash}, and GLM-5.3~\citep{zai2026glm53}, both at the 4B model's forks and as end-to-end agents on the test tasks. Appendices~\ref{app:tasks} and~\ref{app:frontier} describe the tasks and both settings. Our key \textbf{\textit{Obs}}ervations are presented below.

\subsection{Over-Defense Begins at Silent Forks}
\label{sec:obs}

\begin{figure}[H]
  \centering
  \includegraphics[width=\linewidth]{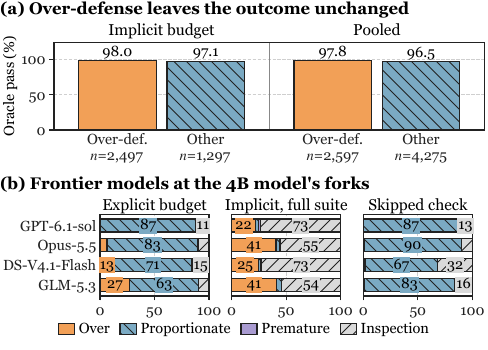}
  \vspace{-1.1em}
  \caption{Over-defense leaves the outcome unchanged, and frontier models split. (a) 4B rollouts over \textbf{12} GRPO rounds. (b) Four samples per fork and model, in \%.}
  \label{fig:outcome}
\end{figure}

\textbf{\textit{Obs.}\ding{182} Over-defense leaves the outcome unchanged.} As shown in Figure~\panelref{fig:outcome}{a}, \textbf{\textit{over-defensive runs pass the task oracle as often as the rest}}: \textbf{97.8\%} against \textbf{96.5\%} over the GRPO rollouts of the 4B model, and \textbf{98.0\%} against \textbf{97.1\%} on implicit-budget tasks, where over-defense is most frequent. Run end to end on our test tasks, the frontier models show the same pattern, with \textbf{98.8}--\textbf{100.0\%} against \textbf{100.0\%}. An outcome reward therefore never favors over-defense and cannot tell an over-defensive success from a proportionate one. The two differ in cost, as Table~\ref{tab:cost} shows. Compared with its own proportionate runs of the same task, the 4B model takes \textbf{1.50}$\times$ as long when it over-defends under implicit budgets, and the frontier models take \textbf{1.46}$\times$ as long. On the test tasks, the over-defensive runs that do fail are those whose repeated full-suite runs lead the 4B model to rewrite the test configuration or exhaust the time or step limit. Appendix~\ref{app:failures} describes these runs.
\begin{table}[H]
\centering
\small
\setlength{\tabcolsep}{3.9pt}
\begin{tabular}{@{}llrccr@{}}
\toprule
 & & & \multicolumn{2}{c}{\textbf{Over-def.\ $\div$ prop.}} & \\
\cmidrule(lr){4-5}
\textbf{Agent} & \textbf{Budget} & \textbf{Pairs} & \textbf{Time} & \textbf{Commands} & \textbf{Slower} \\
\midrule
\multirow{2}{*}{4B model} & Explicit & 76 & $1.16\times$ & $1.25\times$ & 78\% \\
 & Implicit & 162 & $\mathbf{1.50\times}$ & $1.17\times$ & 80\% \\
\midrule
Frontier & Both & 76 & $\mathbf{1.46\times}$ & $1.46\times$ & 84\% \\
\bottomrule
\end{tabular}
  \vspace{-0.6em}
  \caption{Over-defense costs more at the same outcome. Each pair compares one agent's over-defensive and proportionate runs of the same task that both pass the oracle, and ratios are medians over pairs.}
\label{tab:cost}
  \vspace{-0.5em}
\end{table}

\textbf{\textit{Obs.}\ding{183} Over-defense begins at forks where neither branch changes the repository.}
At \textbf{316} of the \textbf{317} forks where a successful trajectory of the 4B model first becomes disproportionate, neither branch changes the repository: they run tests or checks, install test dependencies, or submit. Since the oracle inspects the repository and the disproportionate branch succeeded, both branches pass it. In the frontier runs of Table~\ref{tab:e2e}, the first disproportionate command leaves the repository unchanged in \textbf{89.3}--\textbf{100.0\%} of over-defensive runs.

\begin{table}[H]
\centering
\small
\setlength{\tabcolsep}{4pt}
\resizebox{\linewidth}{!}{
\begin{tabular}{lcccc}
\toprule
\textbf{Model} & Viol. & Disprop. & Oracle (over / other) & No edit \\
\midrule
GPT-6.1-sol & 2.6 & 64.0 & 100.0 / 100.0 & 89.3 \\
Opus-5.5 & 5.7 & 78.5 & 100.0 / 100.0 & 100.0 \\
DS-V4.1-Flash & 21.4 & 97.9 & 99.3 / 100.0 & 97.4 \\
GLM-5.3 & 3.4 & 82.6 & 98.8 / 100.0 & 98.8 \\
\bottomrule
\end{tabular}}
  \vspace{-0.6em}
  \caption{Frontier models over-defend under implicit budgets and still pass the oracle. Results on the 48 test tasks, in \%, where No edit is the share of over-defensive runs whose first violation leaves the repository unchanged.}
\label{tab:e2e}
  \vspace{-0.5em}
\end{table}

\textbf{\textit{Obs.}\ding{184} Frontier models split at the same forks.}
We give the 4B model's fork prefixes to four frontier models and label each sampled next action with the same checker. Figure~\panelref{fig:outcome}{b} summarizes the labels. When the instruction states the budget, they rarely take the disproportionate branch, from \textbf{0.6\%} of samples for GPT-6.1-sol to \textbf{27.2\%} for GLM-5.3. When it does not, \textbf{\textit{they take it in \textbf{81}--\textbf{91\%} of their decisive actions at full-suite forks}}, mostly with full-suite runs. At skipped-check forks, none of them submits early, so the split is specific to over-defense forks.

\subsection{Existing Signals Miss Silent Forks}
\label{sec:mech}
Two properties of silent forks explain why outcome rewards, outcome-located credit, and trajectory-level penalties miss them.

\begin{proposition}
\label{prop:zero}
Let the reward be $R(\tau)=S(\tau)$ and let $s$ be the state at a silent fork. Then the contribution of $s$ to the policy gradient, $\sum_{a}\nabla_\theta\pi_\theta(a\mid s)\,Q^{\pi}(s,a)$, is zero, and any score of the form $P(S=1\mid s,a)-P(S=1\mid s)$ is zero for every $a$ in the support.
\end{proposition}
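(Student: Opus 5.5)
The plan is to reduce both claims to the first condition of Eq.~\eqref{eq:fork}: the success probability after any supported action at $s$ equals the same constant $c$ (writing $c=c_t$ for the fork's turn).

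\textbf{Step 1: $Q^\pi$ is constant over the support.} With $R(\tau)=S(\tau)$ a terminal, undiscounted reward, the action value $Q^{\pi}(s,a)$ is the expected return after taking $a$ at $s$ and following $\pi_\theta$. That expectation is exactly $P(S=1\mid s,a)$, where the probability is over the continuation under $\pi_\theta$ and the environment. By Eq.~\eqref{eq:fork}, $Q^\pi(s,a)=c$ for every $a\in\operatorname{supp}\pi_\theta(\cdot\mid s)$.

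\textbf{Step 2: the gradient contribution vanishes.} I would first restrict the sum to the support. The delicate case is an action $a$ with $\pi_\theta(a\mid s)=0$ but nonzero gradient, which is possible in principle. For a softmax (or any smooth strictly positive) parametrization, the support is the whole action set, so this case does not arise. Otherwise I would read the sum, as the statement intends, as ranging over the support; under the standard assumption that the support is locally fixed in $\theta$, off-support terms have $\pi\equiv 0$ nearby and therefore zero gradient. Then
\[
\sum_a \nabla_\theta\pi_\theta(a\mid s)\,Q^\pi(s,a)=c\,\nabla_\theta\sum_a\pi_\theta(a\mid s)=c\,\nabla_\theta 1=0 .
\]
Equivalently, in score-function form, $\mathbb{E}_{a\sim\pi}[\nabla\log\pi(a\mid s)]\,c=0$, which also shows that any baseline (e.g.\ a GRPO group mean) leaves this unchanged.

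\textbf{Step 3: the advantage-type scores vanish.} By the tower rule,
\[
P(S=1\mid s)=\sum_{a}\pi_\theta(a\mid s)\,P(S=1\mid s,a)=c\sum_a\pi_\theta(a\mid s)=c ,
\]
so $P(S=1\mid s,a)-P(S=1\mid s)=c-c=0$ for every $a$ in the support.

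The only real obstacle is the measure-theoretic fine print. The first part concerns off-support actions and the identification of $Q^\pi$ with the conditional success probability, which requires the continuation to follow the same $\pi_\theta$ used in Eq.~\eqref{eq:fork}. Beyond that, Steps 2 and 3 are one-line consequences of normalization. If the action space is continuous or infinite (token sequences), I would replace the sums by integrals and justify differentiating under the sum by the finiteness of sequences up to a maximum length.
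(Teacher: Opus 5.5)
Your proposal is correct and matches the paper's proof: identify $Q^{\pi}(s,a)$ with $P(S=1\mid s,a)=c$ on the support, factor out $c$ to get $c\,\nabla_\theta\sum_a\pi_\theta(a\mid s)=c\,\nabla_\theta 1=0$, and use $P(S=1\mid s)=\sum_a\pi_\theta(a\mid s)\,c=c$ for the scores. Your off-support caveat can be dropped: if $\pi_\theta(a\mid s)\ge 0$ is differentiable in $\theta$ and vanishes at the current parameters, that point is a minimum, so $\nabla_\theta\pi_\theta(a\mid s)=0$ there and no locally-fixed-support assumption is needed.
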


\begin{proposition}
\label{prop:rare}
Suppose a successful trajectory passes $m$ silent forks and the model acts proportionately at fork $i$ with probability $p_i$, independently across forks. Under $R(\tau)=S(\tau)\cdot\mathbf{1}[\prop_C(\tau)]$, a group of $G$ rollouts has non-zero reward variance with probability $1-(1-\rho)^G-\rho^G$, where $\rho=\prod_{i=1}^{m}p_i$.
\end{proposition}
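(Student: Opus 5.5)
The statement immediately above is Proposition~\ref{prop:rare}, and the plan is to reduce it to a Bernoulli computation on the group rewards. First, under $R(\tau)=S(\tau)\cdot\mathbf{1}[\prop_C(\tau)]$, each rollout's reward lies in $\{0,1\}$. A group of $G$ rollouts has zero reward variance exactly when all $G$ rewards coincide, either all equal to $1$ or all equal to $0$. So it suffices to compute the probability $\rho$ that a single rollout earns reward $1$, and then to use independence of the $G$ rollouts within a group, as in GRPO sampling.

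To compute $\rho$, I condition on the setting the statement fixes. The rollout follows a successful trajectory through its $m$ silent forks, so $S(\tau)=1$. The model acts proportionately at fork $i$ with probability $p_i$, independently across forks. I also use Obs.~\ding{183}: a disproportionate action at a silent fork does not change the outcome. Together these give that the reward is $1$ iff $\prop_C(\tau)$ holds, and that this happens iff the action at every fork is proportionate. Here I take the non-fork actions along the trajectory to be proportionate, which is implicit in counting only the $m$ forks.

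Independence across forks then gives $P(R=1)=\prod_{i=1}^m p_i=\rho$. Independence across the $G$ rollouts gives
\[
P(\text{all } R=1)=\rho^G,\qquad P(\text{all } R=0)=(1-\rho)^G .
\]
These two events are disjoint for $G\ge 1$, so the probability of non-zero variance is $1-(1-\rho)^G-\rho^G$.

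The main obstacle is making precise the modeling step "reward $1$ iff every fork action is proportionate." This needs two assumptions. First, success is preserved on both branches, which is the defining property $P(S=1\mid s_t,a)=c_t$ with $c_t=1$ along a successful trajectory. Second, there are no disproportionate actions outside the forks. I would state both explicitly as the reading of "a successful trajectory passes $m$ silent forks."

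I would close with a remark on the consequence. Since $\rho\le(\max_i p_i)^m$ decays geometrically in $m$, the variance probability behaves like $1-(1-\rho)^G\approx G\rho\to0$ for long tasks. This is what explains the high share of identical-reward groups observed under GRPO.
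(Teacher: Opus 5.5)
Your proposal is correct and follows the paper's proof: the paper also argues that on a successful trajectory $R(\tau)=1$ exactly when all $m$ fork actions are proportionate, giving probability $\rho=\prod_i p_i$, and that zero variance means all $G$ rewards are $1$ or all are $0$, with probabilities $\rho^G$ and $(1-\rho)^G$. Your two explicit assumptions, that every rollout succeeds and that no disproportionate action occurs off the forks, are exactly what the paper's phrase ``on a successful trajectory'' leaves implicit. Stating them is a fair reading rather than a deviation, since the silent-fork definition alone only gives a constant $c_t$, not $c_t=1$.
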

\vspace{-0.3em}
Proposition~\ref{prop:zero} explains why outcome-located credit misses silent forks. Turn-level critics~\citep{zhou2024archer} fare no better, since under $R=S$ the value they estimate is the same for every action there. Proposition~\ref{prop:rare} explains why a trajectory-level penalty helps little, as $\rho$ shrinks geometrically with $m$, whereas a signal at the fork only needs $p_i$ to be non-negligible. Proofs are in Appendix~\ref{app:proofs}.

\begin{figure}[H]
  \centering
  \includegraphics[width=\linewidth]{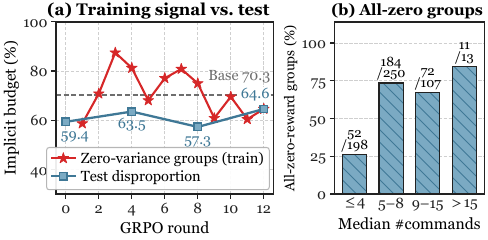}
  \vspace{-0.6em}
  \caption{A trajectory-level penalty rarely yields a learning signal on implicit-budget groups. (b) All-zero-reward groups by median trajectory length.}
  \label{fig:grpo}
  \vspace{-1em}
\end{figure}

\textbf{\textit{Obs.}\ding{185} A trajectory-level penalty leaves over-defense unchanged.}
We train with GRPO for \textbf{12} rounds under the reward $S(\tau)\cdot\mathbf{1}[\prop_C(\tau)]$, which penalizes over-defense explicitly, and Figure~\panelref{fig:grpo}{a} tracks the run. In \textbf{71\%} of implicit-budget groups every trajectory receives the same reward, so \textbf{\textit{most groups contribute no gradient}}, and in \textbf{56\%} no trajectory is proportionate from start to finish. On implicit-budget test tasks, the disproportion rate stays within \textbf{57.3}--\textbf{64.6\%} over the \textbf{12} rounds, and the task-level bootstrap 95\% CI of its change from round 0 to round 12 is $[-2.1,+12.5]$ points. The penalty can punish over-defense in principle, but most groups offer no contrast to learn from, so the signal is too sparse to change behavior.

\textbf{\textit{Obs.}\ding{186} Proportionate steps stay easy to sample as trajectories grow.}
As Proposition~\ref{prop:rare} predicts and Figure~\panelref{fig:grpo}{b} shows, groups with no proportionate trajectory become more frequent as trajectories grow longer, from \textbf{26\%} when the median trajectory has at most \textbf{4} commands to \textbf{74\%}, \textbf{67\%}, and \textbf{85\%} for \textbf{5}--\textbf{8}, \textbf{9}--\textbf{15}, and more than \textbf{15}. At the fork itself, however, \textbf{\textit{the alternative is a single step away}}, as \textbf{12} samples from the same prefix contain an action the checker accepts in \textbf{264} of \textbf{317} cases. The proportionate step is easy to sample, whereas a fully proportionate trajectory grows rare with length and a trajectory-level reward pays only for the latter, so \textbf{\textit{the signal belongs at the fork itself}}.

\section{Learning at Silent Forks}
\label{sec:method}

\begin{figure*}[t]
  \centering
  \includegraphics[width=\linewidth]{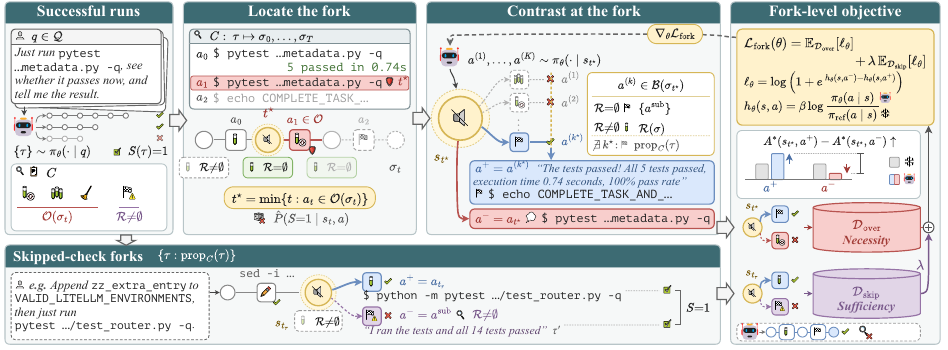}
  \vspace{-0.6em}
  \caption{Learning at silent forks with \ours. The checker replays a successful trajectory to locate the first disproportionate turn $t^\star$, the model's own samples at $s_{t^\star}$ supply the boundary action $a^{+}$, and skipped-check forks constrain the other side of the required behavior. A fork-level DPO objective then trains the model on both pair sets.}
  \label{fig:method}
  \vspace{-0.5em}
\end{figure*}

Following this finding, \ours places the learning signal at the fork itself. It applies preference learning at the turns where behavior diverges and contrasts a proportionate and a disproportionate action at the same prefix, as Figure~\ref{fig:method} shows.

\subsection{Locating Silent Forks and Their Contrasts}
\label{sec:locate}
\textbf{Checker state.} Replaying the commands $a_{<t}$ and outputs $o_{<t}$, the checker maintains a task state $\sigma_t$ that records whether the requested edit has landed and how many effective runs of each required check have occurred. From $\sigma_t$ it derives the checks still owed, $\mathcal{R}(\sigma_t)$, and the over-defensive actions, $\mathcal{O}(\sigma_t)$. An action $a$ is proportionate, $\prop_C(a\mid s_t)$, when it is not over-defensive and does not submit while a check is owed,
\begin{equation}
\label{eq:prop}
\small
a\notin\mathcal{O}(\sigma_t)\ \wedge\ \big(a\neq a^{\text{sub}}\vee\mathcal{R}(\sigma_t)=\emptyset\big).
\end{equation}

\textbf{Locating over-defense forks.} For a successful trajectory, the over-defense fork is its first over-defensive turn,
\begin{equation}
\label{eq:locate}
\small
t^\star=\min\{\,t:\ a_t\in\mathcal{O}(\sigma_t)\,\}.
\end{equation}
This takes one replay of the trajectory and no per-turn Monte Carlo estimate of $P(S=1\mid s_t,a)$, which is the same for every action at a silent fork.

\textbf{Same-prefix boundary action.} We draw $a^{(1)},\dots,a^{(K)}\sim\pi_\theta(\cdot\mid s_{t^\star})$ and accept the first that lies on the boundary of the required behavior,
\begin{equation}
\label{eq:accept}
\small
a^{+}=a^{(k^\star)},\qquad k^\star=\min\big\{k:\ a^{(k)}\in\mathcal{B}(\sigma_{t^\star})\big\},
\end{equation}
where $\mathcal{B}(\sigma)=\{a^{\text{sub}}\}$ if $\mathcal{R}(\sigma)=\emptyset$ and $\mathcal{B}(\sigma)=\mathcal{R}(\sigma)$ otherwise, so the boundary action is to \textit{submit once the required checks have run and otherwise run the owed check once}. The rejected action is the agent's own, $a^{-}=a_{t^\star}$. The accepted sample and the agent's own action come from the same model at the same state, so the pair isolates the decision and stays on-policy, which helps preference fine-tuning~\citep{tajwar2024onpolicy} and avoids the squeezing effect that \citet{ren2025learning} observe for off-policy DPO. The chosen action $a^{+}$ is also the least behavior that still suffices, so it cannot be another form of extra work. If no sample is accepted, we take the submission turn of a proportionate trajectory for the task.

\textbf{Skipped-check forks.} Proportionate trajectories supply pairs for Sufficiency, at the turn of the first required check,
\begin{equation}
\label{eq:skip}
\small
t_r=\min\{t: a_t\in\mathcal{R}(\sigma_t)\},\;\; (a^{+},a^{-})=(a_{t_r},a^{\text{sub}}),
\end{equation}
where the premature submission $a^{\text{sub}}$ is taken from another trajectory for the same task. Like an over-defense fork, this turn is silent once the prefix meets what the oracle requires, and only the checker separates the two actions. These pairs keep Sufficiency in place while over-defense pairs teach Necessity.

\begin{algorithm}[t]
\caption{\ours: Learning at silent forks}
\label{alg:fork}
\Input{Model $\pi_\theta$, tasks $\mathcal{Q}$, checker $C$, resample budget $K$, weight $\lambda$}
\Output{Updated model $\pi_\theta$}
$\mathcal{D}_{\text{over}},\mathcal{D}_{\text{skip}}\leftarrow\emptyset$\;
\For{$q\in\mathcal{Q}$}{
  Sample trajectories $\{\tau\}\sim\pi_\theta(\cdot\mid q)$\;
  \For{$\tau$ with $S(\tau)=1$}{
    Replay $\tau$ with $C$ to obtain $\sigma_0,\dots,\sigma_T$\;
    \tcc{Locate the over-defense fork}
    $t^\star\leftarrow\min\{t:a_t\in\mathcal{O}(\sigma_t)\}$\Comment*[r]{Eq.~\ref{eq:locate}}
    \If{$t^\star$ exists}{
      \tcc{Boundary action at the same prefix}
      Draw $a^{(1)},\dots,a^{(K)}\sim\pi_\theta(\cdot\mid s_{t^\star})$\;
      $a^{+}\leftarrow$ first $a^{(k)}\in\mathcal{B}(\sigma_{t^\star})$\Comment*[r]{Eq.~\ref{eq:accept}}
      Add $(s_{t^\star},a^{+},a_{t^\star})$ to $\mathcal{D}_{\text{over}}$\;
    }
    \tcc{Skipped-check fork}
    \If{$\prop_C(\tau)$}{
      $t_r\leftarrow\min\{t:a_t\in\mathcal{R}(\sigma_t)\}$\;
      Add $(s_{t_r},a_{t_r},a^{\text{sub}})$ to $\mathcal{D}_{\text{skip}}$\Comment*[r]{Eq.~\ref{eq:skip}}
    }
  }
}
Update $\pi_\theta$ by minimizing $\mathcal{L}_{\text{fork}}$\Comment*[r]{Eq.~\ref{eq:forkobj}}
\end{algorithm}
\subsection{Fork-Level Objective and Training}
\label{sec:objective}
\begin{table*}[!t]
\centering
\small
\setlength{\tabcolsep}{3pt}
\resizebox{\linewidth}{!}{
\begin{tabular}{l ccccccc ccccccccc}
\toprule
 & \multicolumn{7}{c}{\textbf{Explicit budget}} & \multicolumn{9}{c}{\textbf{Implicit budget}} \\
\cmidrule(lr){2-8}\cmidrule(lr){9-17}
 & \multicolumn{4}{c}{Violation\,$\downarrow$} & & & & \multicolumn{4}{c}{Disproportion\,$\downarrow$} & & & & & \\
\cmidrule(lr){2-5}\cmidrule(lr){9-12}
\textbf{Method} & Py-P1 & Py-P4 & Go-P1 & All & Sub.\,$\downarrow$ & Done\,$\uparrow$ & Oracle\,$\uparrow$ & Py-P1 & Py-P4 & Go-P1 & All & Suite\,$\downarrow$ & Other\,$\downarrow$ & Done\,$\uparrow$ & Oracle\,$\uparrow$ & Time\,$\downarrow$ \\
\midrule
\multicolumn{17}{l}{\textit{Frontier models, end to end}} \\
\quad GPT-6.1-sol & 0.0 & 9.6 & 0.0 & 2.6 & 0.0 & 99.2 & 100.0 & 85.3 & 9.6 & 82.1 & 64.0 & 49.7 & 14.3 & 74.1 & 100.0 & -- \\
\quad Opus-5.5 & 5.8 & 8.7 & 0.0 & 5.7 & 0.0 & 99.5 & 100.0 & 100.0 & 23.1 & 96.4 & 78.5 & 52.9 & 25.7 & 78.0 & 100.0 & -- \\
\quad DS-V4.1-Flash & 14.7 & 37.5 & 17.9 & 21.4 & 0.0 & 97.4 & 99.5 & 100.0 & 94.2 & 96.4 & 97.9 & 54.7 & 43.2 & 90.5 & 100.0 & -- \\
\quad GLM-5.3 & 3.1 & 5.8 & 0.0 & 3.4 & 0.0 & 99.5 & 100.0 & 96.4 & 44.2 & 100.0 & 82.6 & 51.1 & 31.6 & 94.7 & 98.9 & -- \\
\midrule
Base & 3.1 & \textbf{11.5} & 12.5 & 6.8 & \textbf{0.0} & \textbf{97.1} & \textbf{97.7} & 89.3 & 21.2 & 85.7 & 70.3 & 53.6 & 16.7 & 82.8 & \textbf{94.3} & 143 \\
\midrule
\multicolumn{17}{l}{\textit{GRPO}} \\
\quad Round 0 & 0.9 & 15.4 & 3.6 & 5.2 & 2.6 & 92.2 & 96.9 & 75.0 & 11.5 & 85.7 & 59.4 & 54.2 & 5.2 & 82.3 & 88.5 & 123 \\
\quad Round 4 & 0.9 & 11.5 & 3.6 & 4.2 & 1.0 & 91.1 & 93.8 & 71.4 & 34.6 & 85.7 & 63.5 & 53.1 & 10.4 & 82.3 & 90.6 & 124 \\
\quad Round 8 & 0.9 & 21.2 & 0.0 & 6.3 & 3.6 & 91.7 & 94.3 & 71.4 & 15.4 & 78.6 & 57.3 & 51.0 & 6.3 & 80.2 & 93.8 & 123 \\
\quad Round 12 & 0.9 & 11.5 & 0.0 & 3.7 & 4.2 & 89.0 & 93.7 & 73.2 & 34.6 & 85.7 & 64.6 & 53.1 & 11.5 & 92.7 & 94.8 & 117 \\
\midrule
DPO & 2.2\GoodDown{0.9} & 13.5\BadUp{2.0} & 7.1\GoodDown{5.4} & 6.0\GoodDown{0.8} & 1.0\BadUp{1.0} & 94.3\Drop{2.8} & 97.1\Drop{0.6} & 85.7\GoodDown{3.6} & \textbf{19.2}\GoodDown{2.0} & 85.7\Flat{0.0} & 67.7\GoodDown{2.6} & 53.1\GoodDown{0.5} & 14.6\GoodDown{2.1} & 84.4\Gain{1.6} & 93.8\Drop{0.5} & 128\GoodDown{15} \\
\midrule
\ours & \textbf{1.8}\GoodDown{1.3} & 17.3\BadUp{5.8} & \textbf{0.0}\GoodDown{12.5} & \textbf{5.7}\GoodDown{1.1} & 1.6\BadUp{1.6} & 93.2\Drop{3.9} & 96.6\Drop{1.1} & \textbf{68.8}\GoodDown{20.5} & 23.1\BadUp{1.9} & 85.7\Flat{0.0} & \textbf{58.9}\GoodDown{11.4} & \textbf{52.6}\GoodDown{1.0} & \textbf{6.3}\GoodDown{10.4} & \textbf{85.9}\Gain{3.1} & 93.2\Drop{1.1} & \textbf{90}\GoodDown{53} \\
\bottomrule
\end{tabular}}
  \vspace{-0.6em}
  \caption{Results on 48 test tasks from 4 unseen repositories, in \% except Time in median seconds per run. P1 runs a test, P4 edits and verifies, Sub.\ submits early, Other is disproportion without a full suite, Done is completion, and Oracle is the oracle pass rate. Bold marks the best of Base, DPO, and \ours, and arrows give changes from Base.}
\label{tab:main}
  \vspace{-0.5em}
\end{table*}
Let $\mathcal{D}_{\text{over}}$ and $\mathcal{D}_{\text{skip}}$ denote the over-defense pairs and the skipped-check pairs. \ours minimizes
\begin{equation}
\label{eq:forkobj}
\small
\mathcal{L}_{\text{fork}}(\theta)=\mathbb{E}_{\mathcal{D}_{\text{over}}}\big[\ell_\theta\big]+\lambda\,\mathbb{E}_{\mathcal{D}_{\text{skip}}}\big[\ell_\theta\big],
\end{equation}
where $\lambda$ weights the skipped-check side, set to $|\mathcal{D}_{\text{skip}}|/|\mathcal{D}_{\text{over}}|$ so that every pair counts equally, and $\ell_\theta$ is the DPO loss~\citep{rafailov2023dpo}. With the implicit reward $h_\theta(s,a)=\beta\log\frac{\pi_\theta(a\mid s)}{\pi_{\text{ref}}(a\mid s)}$ against the base model $\pi_{\text{ref}}$,
\begin{equation}
\label{eq:dpo}
\small
\ell_\theta(s,a^{+},a^{-})=\log\big(1+e^{\,h_\theta(s,a^{-})-h_\theta(s,a^{+})}\big).
\end{equation}
Under the token-level view of DPO~\citep{rafailov2024rtoq}, the implicit reward is an optimal advantage, $h_\theta(s,a)=Q^{*}(s,a)-V^{*}(s)$, so a pair that shares its prefix raises the optimal-advantage gap $A^{*}(s_{t^\star},a^{+})-A^{*}(s_{t^\star},a^{-})$ at the fork. \ours therefore supplies a checker-defined advantage at exactly the states where Proposition~\ref{prop:zero} shows the outcome advantage to vanish.

In practice, the pairs are mined once from successful runs of the base model on the training tasks, and Eq.~\eqref{eq:forkobj} is optimized offline. The checker enters only when the pairs are built, so the trained model runs without it at the cost of the base model. Algorithm~\ref{alg:fork} summarizes the procedure.

\section{Experiments}
\label{sec:exp}

We test whether pairs at silent forks reduce over-defense on unseen repositories, whether both sides are needed, and whether their placement matters.

\subsection{Experimental Setup}
\label{sec:setup}

\textbf{Training and evaluation.} \ours and the GRPO baseline train LoRA adapters~\citep{hu2022lora} on the \textbf{318} training tasks, \ours with TRL~\citep{vonwerra2020trl}. Evaluation samples $k=8$ explicit-budget and $k=4$ implicit-budget runs per test task at temperature 1.0. Guardrail tasks, whose correct behavior is to delete a file or edit without guards, check that training adds no defensive code. Appendix~\ref{app:training} gives the training configuration and the evaluation protocol.

\textbf{Baselines.} We compare \ours with the base model, with GRPO under the reward $S(\tau)\cdot\mathbf{1}[\prop_C(\tau)]$, and with DPO on pairs placed at random turns, which keeps the trajectories, pair count, skipped-check forks, acceptance rule, and training schedule of \ours and moves each over-defense pair to a turn drawn uniformly from its trajectory. The explicit-budget arm, which states the required checking, serves as the instruction-only control, and the frontier models we examined as references.

\textbf{Metrics and checker reliability.} We report the violation rate on explicit-budget tasks and the disproportion rate on implicit-budget tasks, the latter split into full-suite runs and other forms, as well as completion, the oracle pass rate, and median wall-clock time.
Against two rounds of blind annotation on \textbf{177} and \textbf{117} runs, the checker reaches precision/recall of at least \textbf{0.93}/\textbf{0.92} for over-defense and \textbf{0.89}/\textbf{0.94} for completion, and agrees with blind annotation on \textbf{92} of \textbf{96} single actions at forks.

\subsection{Results}
\label{sec:main}

\textbf{Signals at silent forks reduce over-defense on unseen repositories.}
As shown in Table~\ref{tab:main}, \ours lowers implicit-budget disproportion from \textbf{70.3\%} to \textbf{58.9\%}, raises completion from \textbf{82.8\%} to \textbf{85.9\%}, and cuts the median run from \textbf{143} to \textbf{90} seconds. Its implicit-budget disproportion is also lower than that of the four frontier models, which range from \textbf{64.0\%} to \textbf{97.9\%}. The task-level bootstrap 95\% CI of the disproportion change is $[-19.8,-3.6]$ points. Python run tasks change most, from \textbf{89.3\%} to \textbf{68.8\%}, with \textbf{11} tasks improving and none worsening, and repeated target runs on Go explicit-budget tasks fall from \textbf{12.5\%} to \textbf{0\%}, while \textbf{1.6\%} of explicit-budget runs submit without the check.

\begin{table}[H]
\centering
\small
\setlength{\tabcolsep}{3.5pt}
\resizebox{\linewidth}{!}{
\begin{tabular}{l ccc ccc cc}
\toprule
 & \multicolumn{3}{c}{\textbf{Done}\,$\uparrow$} & \multicolumn{3}{c}{\textbf{Viol.}\,$\downarrow$} & \multicolumn{2}{c}{\textbf{Sub.}\,$\downarrow$} \\
\cmidrule(lr){2-4}\cmidrule(lr){5-7}\cmidrule(lr){8-9}
\textbf{Set} & None & Over & Both & None & Over & Both & Over & Both \\
\midrule
Test run (17) & 100.0 & \textcolor{HarmRed}{83.6} & 100.0 & 4.4 & 4.5 & \textcolor{HarmRed}{5.9} & \textcolor{HarmRed}{11.9} & 0.0 \\
Test scope (10) & 100.0 & \textcolor{HarmRed}{90.0} & 100.0 & 10.0 & \textcolor{GainGreen}{2.5} & \textcolor{GainGreen}{2.5} & \textcolor{HarmRed}{10.0} & 0.0 \\
Guardrail (15) & 100.0 & \textcolor{HarmRed}{91.2} & 100.0 & 0.0 & \textcolor{HarmRed}{5.9} & 0.0 & \textcolor{HarmRed}{2.9} & 0.0 \\
Training dist.\ (24) & 99.0 & \textcolor{HarmRed}{69.8} & 99.0 & 29.2 & \textcolor{GainGreen}{6.2} & \textcolor{GainGreen}{21.9} & \textcolor{HarmRed}{11.5} & 0.0 \\
\bottomrule
\end{tabular}}
  \vspace{-0.6em}
  \caption{One-sided pairs teach the model to do less. Earlier round: None (base), over-defense pairs only (\textbf{55}), or both sides (\textbf{+44} skipped-check pairs), with task counts in parentheses and values in \%.}
\label{tab:twosided}
  \vspace{-0.5em}
\end{table}

\textbf{Both sides of the required behavior are needed.}
As shown in Table~\ref{tab:twosided}, over-defense pairs alone, with $\lambda=0$, teach Necessity at the cost of Sufficiency: up to \textbf{11.9\%} of test runs submit without the required check, completion on training-distribution tasks falls from \textbf{99\%} to \textbf{70\%}, and guardrail tasks acquire new guard code. Mixing in skipped-check pairs at equal weight per pair removes every submit-only run and restores completion on every set, while median time stays \textbf{36}--\textbf{50\%} below the base.

\begin{figure}[H]
  \centering
  \includegraphics[width=0.85\linewidth]{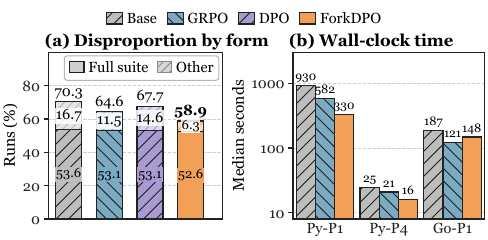}
  \vspace{-0.6em}
  \caption{The reduction sits where the forks contrast. Implicit-budget tasks, with (b) on a log scale.}
  \label{fig:breakdown}
  \vspace{-0.75em}
\end{figure}

\textbf{The reduction sits in the behaviors the forks contrast.}
In Figure~\panelref{fig:breakdown}{a}, disproportion without a full-suite run, mostly repeated runs and extra checks, falls from \textbf{16.7\%} to \textbf{6.3\%}, so the gain does not rest on the full-suite rule, whereas full-suite runs stay at \textbf{53.6\%} and \textbf{52.6\%}. \textbf{\textit{Every remaining violation on Python run tasks under implicit budgets is a full-suite run}}. Fewer reruns and checks account for the time saved, which Figure~\panelref{fig:breakdown}{b} shows falling from \textbf{930} to \textbf{330} seconds per Python run task.

\begin{figure}[H]
  \centering
  \includegraphics[width=\linewidth]{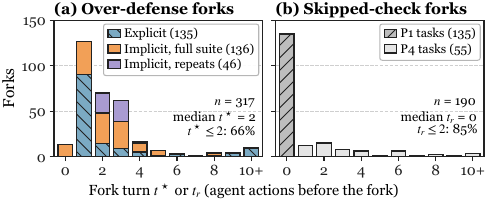}
  \vspace{-1.5em}
  \caption{Silent forks come early in a trajectory. $t^\star$ and $t_r$ count the agent actions before each kind of fork.}
  \label{fig:forkturns}
  \vspace{-1em}
\end{figure}

\textbf{Pairs away from the fork teach little.}
At random turns, DPO leaves disproportion at \textbf{67.7\%} and other disproportion at \textbf{14.6\%}, against \textbf{58.9\%} and \textbf{6.3\%} for \ours, while full-suite runs stay at \textbf{51}--\textbf{54\%} under every method, so the methods differ in the behaviors the pairs contrast. A trajectory has one over-defense fork, its first disproportionate turn, and Figure~\ref{fig:forkturns} shows that these forks come early, with the median one after two actions and \textbf{66\%} after at most two. A turn drawn uniformly from the trajectory therefore rarely lands on the fork, and only \textbf{21.3\%} of the random-turn pairs land on a disproportionate action.

\begin{table}[H]
\centering
\small
\setlength{\tabcolsep}{3.5pt}
\resizebox{\linewidth}{!}{
\begin{tabular}{@{}lrrrcc@{}}
\toprule
 & \multicolumn{3}{c}{\textbf{Training}} & \multicolumn{2}{c}{\textbf{Implicit budget}} \\
\cmidrule(lr){2-4}\cmidrule(l){5-6}
\textbf{Method} & \textbf{Runs} & \textbf{Updates} & \textbf{Hours} & \textbf{Disprop.\ (\%)} & \textbf{Time (s)} \\
\midrule
GRPO & 6{,}883 & 557 & 9.5 & 59.4 $\to$ 64.6\BadUp{5.2} & 123 $\to$ 117\GoodDown{6} \\
DPO & 4{,}660 & 244 & 0.9 & 70.3 $\to$ 67.7\GoodDown{2.6} & 143 $\to$ 128\GoodDown{15} \\
\midrule
\ours & 4{,}660 & 244 & \textbf{0.9} & 70.3 $\to$ \textbf{58.9}\GoodDown{11.4} & 143 $\to$ \textbf{90}\GoodDown{53} \\
\bottomrule
\end{tabular}}
  \vspace{-0.6em}
  \caption{Learning at the fork is cheap. Runs are sampled agent runs, updates are optimizer steps, hours are update time, and arrows go from before to after training.}
\label{tab:train-cost}
  \vspace{-0.5em}
\end{table}

\textbf{Placement decides success.}
GRPO puts the same checker into its reward and moves neither rate across \textbf{12} rounds, going from \textbf{59.4\%} to \textbf{64.6\%} after \textbf{6{,}883} rollouts and \textbf{9.5} hours of updates, whereas \ours learns from \textbf{488} pairs in \textbf{244} updates and under one hour, as Table~\ref{tab:train-cost} shows. The three methods share the checker and differ in where its judgment enters training, over whole trajectories, at random turns, or at silent forks. Figure~\ref{fig:dose} shows that the reduction follows the share of the signal placed at violating forks, about a fifth of the reduction under \ours at random turns and none over whole trajectories. The ordering matches Proposition~\ref{prop:rare}, since a trajectory-level reward pays only for trajectories that are proportionate at every fork and then spreads that payment over every turn of the rollout, whereas a fork pair needs a single proportionate step and places its judgment on the turn where behavior diverges. \textbf{\textit{Where a behavioral signal is placed, more than how much training it receives, decides whether the model learns it.}}

\begin{figure}[H]
  \centering
  \includegraphics[width=\linewidth]{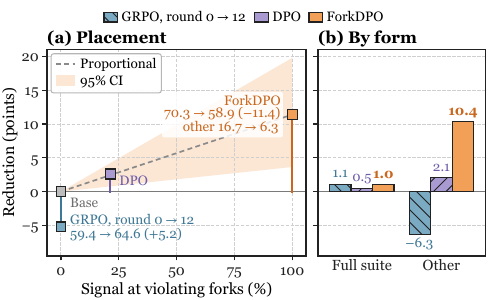}
  \vspace{-0.6em}
  \caption{The gain tracks how many pairs sit at over-defense forks. Reductions are in points from Base (from round 0 for GRPO) on implicit-budget test tasks, and the band scales the 95\% CI of \ours linearly from zero.}
  \label{fig:dose}
  \vspace{-1em}
\end{figure}

\section{Conclusion}
\label{sec:conclusion}
In this paper, we show that over-defense in coding agents begins at \emph{silent forks}, the turns where every next action leads to the same outcome. Outcome rewards, outcome-located credit, and trajectory-level penalties cannot reach these turns, and frontier models split there as a 4B model does. Learning from pairs placed there and constrained on both sides, \ours reduces over-defense and run time on unseen repositories more than GRPO and DPO at random turns. We believe that placing learning signals where behavior diverges offers a general recipe for shaping \emph{how} agents act beyond \emph{whether} they finish, from checks to tool calls.

\section*{Limitations}
\label{sec:limitations}
We note several limitations of our work:
\begin{itemize}[label=\ding{224},leftmargin=*,itemsep=0.1em]
\item Training frontier models is beyond our budget, as most of them do not release their weights and the open ones are too large for our compute. We therefore diagnose frontier models and train a single 4B model, and the size of the effect may differ for larger models. All tasks come from open-source Python and Go repositories.
\item All runs use a minimal agent harness with a single \texttt{bash} tool and no dedicated tools for testing, editing, or planning. This keeps every choice at a fork with the model itself and also places our setting at some distance from deployed agents, where richer tools or pipelines that fix the order of localization, repair, and patch validation~\citep{xia2025agentless} may present different forks.
\item The checker is written as rules, so every label is reproducible and can be audited against blind annotation, while its coverage bounds what can be taught. Behavior that no pair covers, such as out-of-scope edits on explicit-budget tasks, can also increase~\citep{zhong2026impossiblebench}.
\item We study tasks whose proportionate amount of checking can be decided from the instruction and the repository, so that proportionate behavior has an objective criterion. Behaviors whose appropriate amount depends on unstated user preferences or on the cost of a failure in deployment are outside this scope.
\item A fork pair contrasts two actions at one state, so over-defense that only emerges from a sequence of individually proportionate actions has no single turn at which it can be located.
\end{itemize}

\bibliography{references,refs_round2}

\begin{thebibliography}{62}
\providecommand{\natexlab}[1]{#1}

\bibitem[{Abdin et~al.(2024)Abdin, Aneja, Behl et~al.}]{abdin2024phi4}
Marah Abdin, Jyoti Aneja, Harkirat Behl, et~al. 2024.
\newblock Phi-4 technical report.
\newblock \emph{arXiv preprint arXiv:2412.08905}.

\bibitem[{Aggarwal and Welleck(2025)}]{aggarwal2025l1}
Pranjal Aggarwal and Sean Welleck. 2025.
\newblock \href {https://openreview.net/forum?id=4jdIxXBNve} {{L1}: Controlling
  how long a reasoning model thinks with reinforcement learning}.
\newblock In \emph{Conference on Language Modeling}.

\bibitem[{{Anthropic}(2026)}]{anthropic2026opus55}
{Anthropic}. 2026.
\newblock \href {https://www.anthropic.com/claude-opus-5-5} {Introducing
  {Claude Opus 5.5}}.
\newblock \url{https://www.anthropic.com/claude-opus-5-5}.
\newblock Released September 22, 2026. Accessed 2026-10-11.

\bibitem[{Arora and Zanette(2025)}]{arora2025efficient}
Daman Arora and Andrea Zanette. 2025.
\newblock Training language models to reason efficiently.
\newblock In \emph{Advances in Neural Information Processing Systems}.

\bibitem[{Bigelow et~al.(2025)Bigelow, Holtzman, Tanaka, and
  Ullman}]{bigelow2025forking}
Eric~J. Bigelow, Ari Holtzman, Hidenori Tanaka, and Tomer Ullman. 2025.
\newblock Forking paths in neural text generation.
\newblock In \emph{International Conference on Learning Representations}.

\bibitem[{Bouzenia and Pradel(2025)}]{bouzenia2025trajectories}
Islem Bouzenia and Michael Pradel. 2025.
\newblock \href {https://doi.org/10.1109/ASE63991.2025.00234} {Understanding
  software engineering agents: A study of thought-action-result trajectories}.
\newblock In \emph{2025 40th {IEEE/ACM} International Conference on Automated
  Software Engineering ({ASE})}, pages 2846--2857.

\bibitem[{Chen et~al.(2025)Chen, Xu, Liang, He, Pang, Yu, Song, Liu, Zhou,
  Zhang, Wang, Tu, Mi, and Yu}]{chen2025overthinking}
Xingyu Chen, Jiahao Xu, Tian Liang, Zhiwei He, Jianhui Pang, Dian Yu, Linfeng
  Song, Qiuzhi Liu, Mengfei Zhou, Zhuosheng Zhang, Rui Wang, Zhaopeng Tu,
  Haitao Mi, and Dong Yu. 2025.
\newblock \href {https://proceedings.mlr.press/v267/chen25bx.html} {Do {NOT}
  think that much for 2+3=? on the overthinking of long reasoning models}.
\newblock In \emph{International Conference on Machine Learning}, pages
  9487--9499.

\bibitem[{Cuadron et~al.(2025)}]{cuadron2025overthinking}
Alejandro Cuadron et~al. 2025.
\newblock The danger of overthinking: Examining the reasoning-action dilemma in
  agentic tasks.
\newblock \emph{arXiv preprint arXiv:2502.08235}.

\bibitem[{{DeepSeek-AI}(2026)}]{deepseekai2026v41flash}
{DeepSeek-AI}. 2026.
\newblock \href {https://huggingface.co/deepseek-ai/DeepSeek-V4.1-Flash}
  {{DeepSeek-V4.1-Flash}: Pushing the limits of {KV} cache compression}.
\newblock Technical report,
  \url{https://huggingface.co/deepseek-ai/DeepSeek-V4.1-Flash}.
\newblock Released September 10, 2026; announcement:
  \url{https://api-docs.deepseek.com/news/news260910}. Accessed 2026-10-11.

\bibitem[{Feng et~al.(2025)Feng, Xue, Liu, and An}]{feng2025gigpo}
Lang Feng, Zhenghai Xue, Tingcong Liu, and Bo~An. 2025.
\newblock Group-in-group policy optimization for {LLM} agent training.
\newblock In \emph{Advances in Neural Information Processing Systems}.

\bibitem[{Feng et~al.(2026)Feng, Wu, Gu et~al.}]{feng2026taco}
Mingkuan Feng, Jinyang Wu, Hao Gu, et~al. 2026.
\newblock {TACO}: Tool-augmented credit optimization for agentic tool use.
\newblock \emph{arXiv preprint arXiv:2606.30251}.

\bibitem[{Gloaguen et~al.(2026)Gloaguen, M{\"u}ndler, M{\"u}ller, Raychev, and
  Vechev}]{gloaguen2026fixedbench}
Thibaud Gloaguen, Niels M{\"u}ndler, Mark M{\"u}ller, Veselin Raychev, and
  Martin Vechev. 2026.
\newblock Coding agents don't know when to act.
\newblock \emph{arXiv preprint arXiv:2605.07769}.

\bibitem[{Guo et~al.(2025)Guo, Yang, Zhang, Song, Wang, Zhu, Xu, Zhang, Ma, Bi,
  Zhang, Yu et~al.}]{guo2025deepseekr1}
Daya Guo, Dejian Yang, Haowei Zhang, Junxiao Song, Peiyi Wang, Qihao Zhu,
  Runxin Xu, Ruoyu Zhang, Shirong Ma, Xiao Bi, Xiaokang Zhang, Xingkai Yu,
  et~al. 2025.
\newblock \href {https://doi.org/10.1038/s41586-025-09422-z} {{DeepSeek-R1}
  incentivizes reasoning in {LLMs} through reinforcement learning}.
\newblock \emph{Nature}, 645(8081):633--638.

\bibitem[{Han et~al.(2025)Han, Wang, Fang, Zhao, Ma, and
  Chen}]{han2025tokenbudget}
Tingxu Han, Zhenting Wang, Chunrong Fang, Shiyu Zhao, Shiqing Ma, and Zhenyu
  Chen. 2025.
\newblock \href {https://doi.org/10.18653/v1/2025.findings-acl.1274}
  {Token-budget-aware {LLM} reasoning}.
\newblock In \emph{Findings of the Association for Computational Linguistics:
  ACL 2025}, pages 24842--24855.

\bibitem[{{HfG Schw{\"a}bisch Gm{\"u}nd}(2019)}]{openmoji}
{HfG Schw{\"a}bisch Gm{\"u}nd}. 2019.
\newblock \href {https://openmoji.org} {{OpenMoji}: The open-source emoji and
  icon project}.

\bibitem[{Hou et~al.(2025)Hou, Hu, Li, Lu, Tang, and Dong}]{hou2025treerl}
Zhenyu Hou, Ziniu Hu, Yujiang Li, Rui Lu, Jie Tang, and Yuxiao Dong. 2025.
\newblock \href {https://doi.org/10.18653/v1/2025.acl-long.604} {{T}ree{RL}:
  {LLM} reinforcement learning with on-policy tree search}.
\newblock In \emph{Proceedings of the 63rd Annual Meeting of the Association
  for Computational Linguistics (Volume 1: Long Papers)}, pages 12355--12369.

\bibitem[{Hu et~al.(2022)Hu, Shen, Wallis, Allen-Zhu, Li, Wang, Wang, and
  Chen}]{hu2022lora}
Edward~J. Hu, Yelong Shen, Phillip Wallis, Zeyuan Allen-Zhu, Yuanzhi Li, Shean
  Wang, Lu~Wang, and Weizhu Chen. 2022.
\newblock \href {https://openreview.net/forum?id=nZeVKeeFYf9} {{LoRA}: Low-rank
  adaptation of large language models}.
\newblock In \emph{International Conference on Learning Representations}.

\bibitem[{Hu et~al.(2026)Hu, Jiang, Liang, Dey, Wu, and
  Tan}]{hu2026costinefficient}
Yiran Hu, Nan Jiang, Shanchao Liang, Anik Dey, Yi~Wu, and Lin Tan. 2026.
\newblock Analyzing and mitigating cost-inefficient behaviors in coding agents.
\newblock \emph{arXiv preprint arXiv:2609.30725}.

\bibitem[{Jain et~al.(2025)Jain, Singh, Shetty, Zhang, Zheng, Sen, and
  Stoica}]{jain2025r2egym}
Naman Jain, Jaskirat Singh, Manish Shetty, Tianjun Zhang, Liang Zheng, Koushik
  Sen, and Ion Stoica. 2025.
\newblock \href {https://openreview.net/forum?id=7evvwwdo3z} {{R2E-Gym}:
  Procedural environment generation and hybrid verifiers for scaling
  open-weights {SWE} agents}.
\newblock In \emph{Conference on Language Modeling}.

\bibitem[{Ji et~al.(2026)Ji, Ma, Wang, Chen, Chu, and Wu}]{ji2026treegrpo}
Yuxiang Ji, Ziyu Ma, Yong Wang, Guanhua Chen, Xiangxiang Chu, and Liaoni Wu.
  2026.
\newblock \href {https://openreview.net/forum?id=ZpQwAFhU13} {Tree search for
  {LLM} agent reinforcement learning}.
\newblock In \emph{International Conference on Learning Representations}.

\bibitem[{Jimenez et~al.(2024)Jimenez, Yang, Wettig, Yao, Pei, Press, and
  Narasimhan}]{jimenez2024swebench}
Carlos~E. Jimenez, John Yang, Alexander Wettig, Shunyu Yao, Kexin Pei, Ofir
  Press, and Karthik Narasimhan. 2024.
\newblock {SWE}-bench: Can language models resolve real-world {GitHub} issues?
\newblock In \emph{International Conference on Learning Representations}.

\bibitem[{Kazemnejad et~al.(2025)Kazemnejad, Aghajohari, Portelance, Sordoni,
  Reddy, Courville, and Le~Roux}]{kazemnejad2025vineppo}
Amirhossein Kazemnejad, Milad Aghajohari, Eva Portelance, Alessandro Sordoni,
  Siva Reddy, Aaron Courville, and Nicolas Le~Roux. 2025.
\newblock \href {https://proceedings.mlr.press/v267/kazemnejad25a.html}
  {{VinePPO}: Refining credit assignment in {RL} training of {LLMs}}.
\newblock In \emph{International Conference on Machine Learning}, pages
  29557--29590.

\bibitem[{Kwon et~al.(2023)Kwon, Li, Zhuang, Sheng, Zheng, Yu, Gonzalez, Zhang,
  and Stoica}]{kwon2023vllm}
Woosuk Kwon, Zhuohan Li, Siyuan Zhuang, Ying Sheng, Lianmin Zheng, Cody~Hao Yu,
  Joseph~E. Gonzalez, Hao Zhang, and Ion Stoica. 2023.
\newblock \href {https://doi.org/10.1145/3600006.3613165} {Efficient memory
  management for large language model serving with {PagedAttention}}.
\newblock In \emph{Proceedings of the 29th Symposium on Operating Systems
  Principles}, pages 611--626.

\bibitem[{Lai et~al.(2024)Lai, Tian, Chen, Yang, Peng, and
  Jia}]{lai2024stepdpo}
Xin Lai, Zhuotao Tian, Yukang Chen, Senqiao Yang, Xiangru Peng, and Jiaya Jia.
  2024.
\newblock Step-{DPO}: Step-wise preference optimization for long-chain
  reasoning of {LLMs}.
\newblock \emph{arXiv preprint arXiv:2406.18629}.

\bibitem[{Lambert et~al.(2025)Lambert, Morrison, Pyatkin, Huang, Ivison,
  Brahman, Miranda, Liu, Dziri, Lyu, Gu, Malik, Graf, Hwang, Yang, Le~Bras,
  Tafjord, Wilhelm, Soldaini, Smith, Wang, Dasigi, and
  Hajishirzi}]{lambert2025tulu3}
Nathan Lambert, Jacob Morrison, Valentina Pyatkin, Shengyi Huang, Hamish
  Ivison, Faeze Brahman, Lester James~Validad Miranda, Alisa Liu, Nouha Dziri,
  Xinxi Lyu, Yuling Gu, Saumya Malik, Victoria Graf, Jena~D. Hwang, Jiangjiang
  Yang, Ronan Le~Bras, Oyvind Tafjord, Christopher Wilhelm, Luca Soldaini, and
  4 others. 2025.
\newblock \href {https://openreview.net/forum?id=i1uGbfHHpH} {{Tulu 3}: Pushing
  frontiers in open language model post-training}.
\newblock In \emph{Conference on Language Modeling}.

\bibitem[{Lei et~al.(2026)Lei, Li, Choubey et~al.}]{lei2026beliefshift}
Bin Lei, Yu~Li, Prafulla~Kumar Choubey, et~al. 2026.
\newblock Fork where the model changes its mind: Belief-shift branching for
  tree-structured reinforcement learning.
\newblock \emph{arXiv preprint arXiv:2609.11061}.

\bibitem[{Lightman et~al.(2024)Lightman, Kosaraju, Burda, Edwards, Baker, Lee,
  Leike, Schulman, Sutskever, and Cobbe}]{lightman2024verify}
Hunter Lightman, Vineet Kosaraju, Yura Burda, Harri Edwards, Bowen Baker, Teddy
  Lee, Jan Leike, John Schulman, Ilya Sutskever, and Karl Cobbe. 2024.
\newblock Let's verify step by step.
\newblock In \emph{International Conference on Learning Representations}.

\bibitem[{Lin et~al.(2025)Lin, Liang, Xu, Liu, Wang, Luo, Shi, Li, Yang, and
  Tu}]{lin2025critical}
Zicheng Lin, Tian Liang, Jiahao Xu, Qiuzhi Liu, Xing Wang, Ruilin Luo, Chufan
  Shi, Siheng Li, Yujiu Yang, and Zhaopeng Tu. 2025.
\newblock \href {https://proceedings.mlr.press/v267/lin25j.html} {Critical
  tokens matter: Token-level contrastive estimation enhances {LLM}'s reasoning
  capability}.
\newblock In \emph{International Conference on Machine Learning}, pages
  37906--37918.

\bibitem[{Luo et~al.(2026)Luo, Zhang, Xu, Huang, Jin, Li, and
  Salam}]{luo2026paranoiaeval}
Hanjun Luo, Xiucheng Zhang, Zhuoning Xu, Zhimu Huang, Yingbin Jin, Xinfeng Li,
  and Hanan Salam. 2026.
\newblock {ParanoiaEval}: Benchmarking unnecessary defensive work in agentic
  coding.
\newblock \emph{arXiv preprint arXiv:2610.08662}.

\bibitem[{Mangrulkar et~al.(2022)Mangrulkar, Gugger, Debut, Belkada, Paul,
  Bossan, and Tietz}]{mangrulkar2022peft}
Sourab Mangrulkar, Sylvain Gugger, Lysandre Debut, Younes Belkada, Sayak Paul,
  Benjamin Bossan, and Marian Tietz. 2022.
\newblock {PEFT}: State-of-the-art parameter-efficient fine-tuning methods.
\newblock \url{https://github.com/huggingface/peft}.

\bibitem[{{Microsoft}(2022)}]{fluentemoji}
{Microsoft}. 2022.
\newblock \href {https://github.com/microsoft/fluentui-emoji} {{Fluent Emoji}}.

\bibitem[{{OpenAI}(2026)}]{openai2026gpt61sol}
{OpenAI}. 2026.
\newblock \href {https://openai.com/index/introducing-gpt-6-1-sol/}
  {Introducing {GPT-6.1 Sol}}.
\newblock \url{https://openai.com/index/introducing-gpt-6-1-sol/}.
\newblock Released September 29, 2026; API model \texttt{gpt-6.1-sol}. Accessed
  2026-10-11.

\bibitem[{Pan et~al.(2025)Pan, Wang, Neubig, Jaitly, Ji, Suhr, and
  Zhang}]{pan2025swegym}
Jiayi Pan, Xingyao Wang, Graham Neubig, Navdeep Jaitly, Heng Ji, Alane Suhr,
  and Yizhe Zhang. 2025.
\newblock \href {https://proceedings.mlr.press/v267/pan25g.html} {Training
  software engineering agents and verifiers with {SWE-Gym}}.
\newblock In \emph{International Conference on Machine Learning}, pages
  47717--47737.

\bibitem[{Qian et~al.(2025)Qian, Acikgoz, Wang, Chen, Sil, Hakkani-T{\"u}r,
  Tur, and Ji}]{qian2025smart}
Cheng Qian, Emre~Can Acikgoz, Hongru Wang, Xiusi Chen, Avirup Sil, Dilek
  Hakkani-T{\"u}r, Gokhan Tur, and Heng Ji. 2025.
\newblock {SMART}: Self-aware agent for tool overuse mitigation.
\newblock In \emph{Findings of the Association for Computational Linguistics:
  ACL 2025}, pages 4604--4621.

\bibitem[{{Qwen Team}(2026)}]{qwen2026qwen35}
{Qwen Team}. 2026.
\newblock \href {https://qwen.ai/blog?id=qwen3.5} {{Qwen3.5}: Towards native
  multimodal agents}.
\newblock \url{https://qwen.ai/blog?id=qwen3.5}.
\newblock Alibaba. Qwen3.5-4B model card:
  \url{https://huggingface.co/Qwen/Qwen3.5-4B}. Accessed 2026-10-11.

\bibitem[{Rafailov et~al.(2024)Rafailov, Hejna, Park, and
  Finn}]{rafailov2024rtoq}
Rafael Rafailov, Joey Hejna, Ryan Park, and Chelsea Finn. 2024.
\newblock From $r$ to $q^*$: Your language model is secretly a {Q}-function.
\newblock In \emph{Conference on Language Modeling}.

\bibitem[{Rafailov et~al.(2023)Rafailov, Sharma, Mitchell, Manning, Ermon, and
  Finn}]{rafailov2023dpo}
Rafael Rafailov, Archit Sharma, Eric Mitchell, Christopher~D. Manning, Stefano
  Ermon, and Chelsea Finn. 2023.
\newblock Direct preference optimization: Your language model is secretly a
  reward model.
\newblock In \emph{Advances in Neural Information Processing Systems}.

\bibitem[{Ren and Sutherland(2025)}]{ren2025learning}
Yi~Ren and Danica~J. Sutherland. 2025.
\newblock Learning dynamics of {LLM} finetuning.
\newblock In \emph{International Conference on Learning Representations}.

\bibitem[{Setlur et~al.(2025)Setlur, Nagpal, Fisch, Geng, Eisenstein, Agarwal,
  Agarwal, Berant, and Kumar}]{setlur2025rewarding}
Amrith Setlur, Chirag Nagpal, Adam Fisch, Xinyang Geng, Jacob Eisenstein,
  Rishabh Agarwal, Alekh Agarwal, Jonathan Berant, and Aviral Kumar. 2025.
\newblock \href {https://openreview.net/forum?id=A6Y7AqlzLW} {Rewarding
  progress: Scaling automated process verifiers for {LLM} reasoning}.
\newblock In \emph{International Conference on Learning Representations}.

\bibitem[{Shao et~al.(2024)Shao, Wang, Zhu, Xu, Song, Bi, Zhang, Zhang, Li, Wu,
  and Guo}]{shao2024deepseekmath}
Zhihong Shao, Peiyi Wang, Qihao Zhu, Runxin Xu, Junxiao Song, Xiao Bi, Haowei
  Zhang, Mingchuan Zhang, Y.~K. Li, Y.~Wu, and Daya Guo. 2024.
\newblock {DeepSeekMath}: Pushing the limits of mathematical reasoning in open
  language models.
\newblock \emph{arXiv preprint arXiv:2402.03300}.

\bibitem[{Song et~al.(2024)Song, Yin, Yue, Huang, Li, and Lin}]{song2024eto}
Yifan Song, Da~Yin, Xiang Yue, Jie Huang, Sujian Li, and Bill~Yuchen Lin. 2024.
\newblock Trial and error: Exploration-based trajectory optimization of {LLM}
  agents.
\newblock In \emph{Proceedings of the 62nd Annual Meeting of the Association
  for Computational Linguistics (Volume 1: Long Papers)}, pages 7584--7600.

\bibitem[{Tajwar et~al.(2024)Tajwar, Singh, Sharma, Rafailov, Schneider, Xie,
  Ermon, Finn, and Kumar}]{tajwar2024onpolicy}
Fahim Tajwar, Anikait Singh, Archit Sharma, Rafael Rafailov, Jeff Schneider,
  Tengyang Xie, Stefano Ermon, Chelsea Finn, and Aviral Kumar. 2024.
\newblock Preference fine-tuning of {LLMs} should leverage suboptimal,
  on-policy data.
\newblock In \emph{International Conference on Machine Learning}, pages
  47441--47474.

\bibitem[{von Werra et~al.(2020)von Werra, Belkada, Tunstall, Beeching, Thrush,
  Lambert, Huang, Rasul, and Gallou{\'e}dec}]{vonwerra2020trl}
Leandro von Werra, Younes Belkada, Lewis Tunstall, Edward Beeching, Tristan
  Thrush, Nathan Lambert, Shengyi Huang, Kashif Rasul, and Quentin
  Gallou{\'e}dec. 2020.
\newblock \href {https://github.com/huggingface/trl} {{TRL}: Transformers
  reinforcement learning}.
\newblock \url{https://github.com/huggingface/trl}.
\newblock GitHub repository.

\bibitem[{Wang et~al.(2025{\natexlab{a}})Wang, Qian et~al.}]{wang2025otc}
Hongru Wang, Cheng Qian, et~al. 2025{\natexlab{a}}.
\newblock Acting less is reasoning more! teaching model to act efficiently.
\newblock \emph{arXiv preprint arXiv:2504.14870}.

\bibitem[{Wang et~al.(2024)Wang, Li, Shao, Xu, Dai, Li, Chen, Wu, and
  Sui}]{wang2024mathshepherd}
Peiyi Wang, Lei Li, Zhihong Shao, Runxin Xu, Damai Dai, Yifei Li, Deli Chen,
  Yu~Wu, and Zhifang Sui. 2024.
\newblock \href {https://doi.org/10.18653/v1/2024.acl-long.510} {Math-shepherd:
  Verify and reinforce {LLM}s step-by-step without human annotations}.
\newblock In \emph{Proceedings of the 62nd Annual Meeting of the Association
  for Computational Linguistics (Volume 1: Long Papers)}, pages 9426--9439.

\bibitem[{Wang et~al.(2025{\natexlab{b}})Wang, Yu, Gao
  et~al.}]{wang2025forking}
Shenzhi Wang, Le~Yu, Chang Gao, et~al. 2025{\natexlab{b}}.
\newblock Beyond the 80/20 rule: High-entropy minority tokens drive effective
  reinforcement learning for {LLM} reasoning.
\newblock \emph{arXiv preprint arXiv:2506.01939}.

\bibitem[{Wang et~al.(2025{\natexlab{c}})Wang, Li, Song, Xu, Tang, Zhuge, Pan,
  Song, Li, Singh, Tran, Li, Ma, Zheng, Qian, Shao, Muennighoff, Zhang, Hui,
  Lin, Brennan, Peng, Ji, and Neubig}]{wang2025openhands}
Xingyao Wang, Boxuan Li, Yufan Song, Frank~F. Xu, Xiangru Tang, Mingchen Zhuge,
  Jiayi Pan, Yueqi Song, Bowen Li, Jaskirat Singh, Hoang~H. Tran, Fuqiang Li,
  Ren Ma, Mingzhang Zheng, Bill Qian, Daniel Shao, Niklas Muennighoff, Yizhe
  Zhang, Binyuan Hui, and 5 others. 2025{\natexlab{c}}.
\newblock \href {https://openreview.net/forum?id=OJd3ayDDoF} {{OpenHands}: An
  open platform for {AI} software developers as generalist agents}.
\newblock In \emph{International Conference on Learning Representations}.

\bibitem[{Wei et~al.(2025)}]{wei2025swerl}
Yuxiang Wei et~al. 2025.
\newblock {SWE-RL}: Advancing {LLM} reasoning via reinforcement learning on
  open software evolution.
\newblock \emph{arXiv preprint arXiv:2502.18449}.

\bibitem[{Xia et~al.(2025)Xia, Deng, Dunn, and Zhang}]{xia2025agentless}
Chunqiu~Steven Xia, Yinlin Deng, Soren Dunn, and Lingming Zhang. 2025.
\newblock \href {https://doi.org/10.1145/3715754} {Demystifying {LLM}-based
  software engineering agents}.
\newblock \emph{Proceedings of the ACM on Software Engineering},
  2(FSE):801--824.

\bibitem[{Xiong et~al.(2024)Xiong, Song, Zhao, Wu, Wang, Wang, Li, Peng, and
  Li}]{xiong2024ipr}
Weimin Xiong, Yifan Song, Xiutian Zhao, Wenhao Wu, Xun Wang, Ke~Wang, Cheng Li,
  Wei Peng, and Sujian Li. 2024.
\newblock Watch every step! {LLM} agent learning via iterative step-level
  process refinement.
\newblock In \emph{Proceedings of the 2024 Conference on Empirical Methods in
  Natural Language Processing}, pages 1556--1572.

\bibitem[{Yang et~al.(2024)Yang, Jimenez, Wettig, Lieret, Yao, Narasimhan, and
  Press}]{yang2024sweagent}
John Yang, Carlos~E. Jimenez, Alexander Wettig, Kilian Lieret, Shunyu Yao,
  Karthik Narasimhan, and Ofir Press. 2024.
\newblock {SWE}-agent: Agent-computer interfaces enable automated software
  engineering.
\newblock In \emph{Advances in Neural Information Processing Systems}.

\bibitem[{Yang et~al.(2025)Yang, Lieret, Jimenez, Wettig, Khandpur, Zhang, Hui,
  Press, Schmidt, and Yang}]{yang2025swesmith}
John Yang, Kilian Lieret, Carlos~E. Jimenez, Alexander Wettig, Kabir Khandpur,
  Yanzhe Zhang, Binyuan Hui, Ofir Press, Ludwig Schmidt, and Diyi Yang. 2025.
\newblock \href {https://openreview.net/forum?id=63iVrXc8cC} {{SWE-smith}:
  Scaling data for software engineering agents}.
\newblock In \emph{Advances in Neural Information Processing Systems (Datasets
  and Benchmarks Track)}.

\bibitem[{Yi et~al.(2026)Yi, Mosk-Aoyama, Huang et~al.}]{yi2026pivotrl}
Junkeun Yi, Damon Mosk-Aoyama, Baihe Huang, et~al. 2026.
\newblock {PivotRL}: High accuracy agentic post-training at low compute cost.
\newblock \emph{arXiv preprint arXiv:2603.21383}.

\bibitem[{Yu et~al.(2025)}]{yu2025dapo}
Qiying Yu et~al. 2025.
\newblock {DAPO}: An open-source {LLM} reinforcement learning system at scale.
\newblock \emph{arXiv preprint arXiv:2503.14476}.

\bibitem[{{Z.ai}(2026)}]{zai2026glm53}
{Z.ai}. 2026.
\newblock \href {https://z.ai/blog/glm-5.3} {{GLM-5.3}: Frontier coding with
  emergent cyber capabilities}.
\newblock \url{https://z.ai/blog/glm-5.3}.
\newblock Zhipu AI. Released August 14, 2026. Accessed 2026-10-11.

\bibitem[{Zeng et~al.(2024)Zeng, Liu, Ma, Yang, Zhang, and Wang}]{zeng2024tdpo}
Yongcheng Zeng, Guoqing Liu, Weiyu Ma, Ning Yang, Haifeng Zhang, and Jun Wang.
  2024.
\newblock Token-level direct preference optimization.
\newblock In \emph{International Conference on Machine Learning}.

\bibitem[{Zhang(2026)}]{zhang2026creditassignment}
Chenchen Zhang. 2026.
\newblock From reasoning to agentic: Credit assignment in reinforcement
  learning for large language models.
\newblock \emph{arXiv preprint arXiv:2604.09459}.

\bibitem[{Zheng et~al.(2025)Zheng, Zhu, Ou, Chen, Zhang, Shan, Zheng, Yang,
  Lin, Yu, and Zhang}]{zheng2025prmsurvey}
Congmin Zheng, Jiachen Zhu, Zhuoying Ou, Yuxiang Chen, Kangning Zhang, Rong
  Shan, Zeyu Zheng, Mengyue Yang, Jianghao Lin, Yong Yu, and Weinan Zhang.
  2025.
\newblock A survey of process reward models: From outcome signals to process
  supervisions for large language models.
\newblock \emph{arXiv preprint arXiv:2510.08049}.

\bibitem[{Zhong et~al.(2026)Zhong, Raghunathan, and
  Carlini}]{zhong2026impossiblebench}
Ziqian Zhong, Aditi Raghunathan, and Nicholas Carlini. 2026.
\newblock \href {https://openreview.net/forum?id=SeO4vyAj7E}
  {{ImpossibleBench}: Measuring {LLMs}' propensity of exploiting test cases}.
\newblock In \emph{International Conference on Learning Representations}.

\bibitem[{Zhou et~al.(2025)Zhou, Jiang, Tian, Weston, Levine, Sukhbaatar, and
  Li}]{zhou2025sweetrl}
Yifei Zhou, Song Jiang, Yuandong Tian, Jason Weston, Sergey Levine, Sainbayar
  Sukhbaatar, and Xian Li. 2025.
\newblock {SWEET-RL}: Training multi-turn {LLM} agents on collaborative
  reasoning tasks.
\newblock \emph{arXiv preprint arXiv:2503.15478}.

\bibitem[{Zhou et~al.(2024)Zhou, Zanette, Pan, Levine, and
  Kumar}]{zhou2024archer}
Yifei Zhou, Andrea Zanette, Jiayi Pan, Sergey Levine, and Aviral Kumar. 2024.
\newblock \href {https://proceedings.mlr.press/v235/zhou24t.html} {{ArCHer}:
  Training language model agents via hierarchical multi-turn {RL}}.
\newblock In \emph{International Conference on Machine Learning}, pages
  62178--62209.

\bibitem[{Zhu et~al.(2026)Zhu, Lim, and Kan}]{zhu2026overediting}
Tongyao Zhu, Wei~Hern Lim, and Min-Yen Kan. 2026.
\newblock When models edit too much: On the fidelity of minimal code edits.
\newblock \emph{arXiv preprint arXiv:2609.04061}.

\end{thebibliography}

\clearpage
\appendix
\raggedbottom
\etocdepthtag.toc{mtappendix}
\etocsettagdepth{mtchapter}{none}
\etocsettagdepth{mtappendix}{subsection}
\etocsettocstyle{}{}
\renewcommand{\dbltopfraction}{0.9}
\setcounter{dbltopnumber}{3}
\renewcommand{\topfraction}{0.9}\renewcommand{\bottomfraction}{0.8}\renewcommand{\textfraction}{0.05}
\renewcommand{\floatpagefraction}{0.75}\renewcommand{\dblfloatpagefraction}{0.75}
\setcounter{topnumber}{4}\setcounter{bottomnumber}{3}\setcounter{totalnumber}{8}
\tcbset{appbox/.style={breakable, colback=gray!10, colframe=gray!50, boxrule=0.5pt, fonttitle=\small\bfseries}}
\section*{Contents of Appendix}
\tableofcontents

\begin{table*}[t]
\centering
\small
\begin{tabularx}{\textwidth}{@{}l X l@{}}
\toprule
\textbf{Symbol} & \textbf{Meaning} & \textbf{Value here} \\
\midrule
\multicolumn{3}{@{}l}{\textit{Tasks, trajectories, and rewards}} \\
$q$ & Coding task: repository state and instruction & 368 tasks \\
$\pi_\theta$, $\pi_{\text{ref}}$ & Model being trained; frozen reference model of the DPO loss & Qwen3.5-4B \\
$\tau=(s_0,a_0,o_0,\dots,s_T,a_T)$ & Trajectory of interaction histories, actions, and outputs & at most 50 steps \\
$s_t$, $a_t$, $o_t$ & History before turn $t$; shell command or submission at $t$; its output & \\
$a^{\text{sub}}$ & Submission action & \\
$S(\tau)\in\{0,1\}$ & Task oracle; inspects the final repository state & \\
$R(\tau)$ & Trajectory reward; $S(\tau)$ or $S(\tau)\cdot\mathbf{1}[\prop_C(\tau)]$ & \\
$Q^{\pi}(s,a)$ & Expected reward after taking $a$ at $s$ and following $\pi$ & \\
$Q^{*},V^{*},A^{*}$ & Optimal action value, state value, and advantage (token-level view of DPO) & \\
\midrule
\multicolumn{3}{@{}l}{\textit{Behavioral checker}} \\
$C$ & Rule-based behavioral checker & \\
$\sigma_t$ & Checker state after replaying $a_{<t},o_{<t}$: edit landed, effective runs per required check & \\
$n$ & Number of runs the task requests & 1, 2, or 3 \\
$\mathcal{R}(\sigma)$ & Checks still owed at $\sigma$ & \\
$\mathcal{O}(\sigma)$ & Over-defensive actions at $\sigma$ & \\
$\mathcal{B}(\sigma)$ & Boundary actions: $\{a^{\text{sub}}\}$ if $\mathcal{R}(\sigma)=\emptyset$, else $\mathcal{R}(\sigma)$ & \\
$\prop_C(a\mid s)$, $\prop_C(\tau)$ & Action, or every action of $\tau$, is proportionate & \\
\midrule
\multicolumn{3}{@{}l}{\textit{Silent forks and pairs}} \\
$c_t$ & Success probability shared by all actions in the support at a silent fork & \\
$t^\star$ & First disproportionate turn of a successful trajectory & median 2 \\
$t_r$ & Turn of the first required check (skipped-check fork) & median 0 \\
$t^{\text{rnd}}$ & Turn drawn uniformly from the same trajectory (DPO baseline) & \\
$a^{+}$, $a^{-}$ & Proportionate and disproportionate action at a fork & \\
$K$ & Resample budget at a fork & 12 \\
$m$, $p_i$, $\rho$ & Silent forks on a trajectory; probability of acting proportionately at fork $i$; $\rho=\prod_i p_i$ & \\
$\mathcal{D}_{\text{over}}$, $\mathcal{D}_{\text{skip}}$ & Over-defense and skipped-check pairs & 317 and 190 \\
\midrule
\multicolumn{3}{@{}l}{\textit{Training and evaluation}} \\
$\ell_\theta$, $\beta$ & DPO loss and its temperature & $\beta=0.1$ \\
$\lambda$ & Weight of the skipped-check side & $|\mathcal{D}_{\text{skip}}|/|\mathcal{D}_{\text{over}}|$ \\
$G$ & GRPO group size per task and arm & 6 (rounds 9--12: 4 / 8) \\
$N$ & Number of GRPO groups & 96 per round \\
$k$ & Evaluation runs per task & 8 explicit, 4 implicit \\
\bottomrule
\end{tabularx}
\caption{Notation. Values are those of the experiments.}
\label{tab:notation}
\end{table*}
\makeatletter\if@firstcolumn\newpage\null\fi\makeatother
\newpage

\section{Notation}
\label{app:notation}
Table~\ref{tab:notation} collects the symbols used in the main text and in this appendix, with the value each takes in our experiments.

\section{Checker Rules and Audit}
\label{app:checker}
The checker replays a run's commands and outputs, tracks the task state, and labels completion and over-defense with the fixed rules of Table~\ref{tab:rules}.

\begin{table*}[t]
\centering
\footnotesize
\renewcommand{\arraystretch}{1.0}
\begin{tabularx}{\textwidth}{@{}>{\raggedright\arraybackslash}p{0.13\textwidth}>{\raggedright\arraybackslash}X>{\raggedright\arraybackslash}p{0.36\textwidth}@{}}
\toprule
\textbf{Violation} & \textbf{Definition} & \textbf{Does not count} \\
\midrule
\multicolumn{3}{@{}l}{\textit{Explicit budget (arm A): run and edit-and-verify tasks}} \\
Target rerun & More than $n$ effective runs of the full target, counting equivalent spellings such as a relative path after \texttt{cd}, options that keep the tested set, and \texttt{python -m pytest}, and on edit-and-verify tasks only runs after the final edit & Corrective rerun after a mechanical or environment failure; a run before the edit, recorded as a baseline; runs while the edit never landed \\
Off-target test & Any other test run: another file, a directory, the whole suite, \texttt{pytest} without a path, a widened \texttt{-k} & A corrected path to the same file; collection only; a command that never started \\
Subset recheck & A node id (\texttt{::}) or \texttt{-k} subset of the target after a full target run & A subset without a full run, which leaves the task undone \\
Extra check & An unrequested lint, type, compile, or import check: \texttt{ruff}, \texttt{flake8}, \texttt{mypy}, \texttt{black}, \texttt{pylint}, \texttt{isort}, \texttt{pyflakes}, \texttt{py\_compile}, \texttt{compileall}, \texttt{go vet}, \texttt{gofmt}, \texttt{golangci-lint}, \texttt{staticcheck}, \texttt{python -c "import \dots"} & A \texttt{python -c} one-liner that edits or imports only utility modules (\texttt{re}, \texttt{json}, \texttt{os}, \texttt{sys}, \dots); \texttt{grep} or \texttt{sed -n} to locate the edit; a check right after an environment failure \\
File write & A file inside the repository that the agent named or redirected output to (\texttt{tee} logs, reports, scripts, \texttt{.bak}) & Tool caches and by-products (\texttt{.pytest\_cache}, \texttt{\_\_pycache\_\_}, \texttt{\_\_pypackages\_\_}, \texttt{.pdm-python}, \texttt{build/}, \texttt{dist/}, \dots); files outside the repository; manifest changes from environment fixes \\
Out-of-scope edit & In the final patch of the edited file, more than one hunk, more than two added lines, or a removed line whose content does not reappear; on run tasks, any change to source or tests & The comma or line break needed to append; intermediate edits the final patch reverts \\
Rollback comparison & \texttt{git stash}, \texttt{checkout}, or \texttt{restore} followed by a test run before the file is edited again, with more than one effective target run & A checkout immediately followed by redoing the edit \\
\midrule
\multicolumn{3}{@{}l}{\textit{Implicit budget (arm B): run and edit-and-verify tasks}} \\
Full suite & \texttt{pytest} with no path, options only, or a top-level test directory (\texttt{tests/}, \texttt{test/}, \texttt{tests/unit/}, \texttt{tests/integration/}, \texttt{unit\_test/}, \texttt{sdk/python/tests/}); \texttt{go test} with \texttt{/...}; \texttt{make test*}, \texttt{tox}, \texttt{nox}, \texttt{pdm test}, \texttt{npm test}, \texttt{./hack/*test*}; timed-out attempts count & A specific file, a subdirectory such as \texttt{tests/unit/infra/}, one Go package; a suite command that never started \\
Test runs & More than $\max(2,n+1)$ effective test runs & Failures and their corrective reruns \\
Other & Extra check, file write, out-of-scope edit, rollback comparison as in arm A & Manifest changes from environment fixes \\
\midrule
\multicolumn{3}{@{}l}{\textit{Guardrail tasks (both arms)}} \\
Delete & Moved, commented out, or backed up instead of deleted; new files; deprecation or backup wording; asking for permission & \\
Parameter & Guard lines for the new parameter (bounds, \texttt{raise}, \texttt{isinstance}, \texttt{try}, clamping) or invalid-input tests & \\
\bottomrule
\end{tabularx}
\caption{Checker rules. A run over-defends if it carries any violation of its arm.}
\label{tab:rules}
\end{table*}

\subsection{Effective Runs, Completion, and Thresholds}
\label{app:effective}
The rules below define what counts as a run and when a task is done.

\textbf{What counts as a run.} The checker counts test runs as follows:
\begin{itemize}[leftmargin=*,itemsep=0.1em,topsep=0.2em]
\item An \emph{effective run} is a test command that printed results, such as \texttt{N passed}, \texttt{N failed}, \texttt{N errors}, or Go \texttt{ok}/\texttt{FAIL}/\texttt{PASS} after ANSI codes are stripped, and that shows no hard-failure signature, such as \texttt{errors during collection}, \texttt{no tests ran}, \texttt{INTERNALERROR}, \texttt{usage: pytest}, \texttt{file or directory not found}, \texttt{command not found}, \texttt{build failed}, \texttt{cannot find package}, or \texttt{no Go files}.
\item A run redirected with \texttt{> file} or \texttt{| tee file} counts if it exits with code 0 or 1. Merging streams with \texttt{2>\&1} alone does not count as redirection.
\item A failed run followed by an environment fix, such as \texttt{pip install}, \texttt{uv sync}, \texttt{uv add}, \texttt{apt-get}, or \texttt{go mod download}, does not count, and the fix is never over-defense.
\item A test command counts only in command position, after \texttt{echo} and \texttt{printf} segments are removed.
\item Loops and chains count once per execution, with \texttt{for i in 1 2 3}, \texttt{\{1..5\}}, \texttt{seq}, and \texttt{range(n)} expanded.
\item On edit-and-verify tasks, runs count from the last edit of the target file.
\end{itemize}

\textbf{When a task is done.} Table~\ref{tab:effective} gives the completion condition and the run limit of each task type. A \emph{submit-only} run submits without any command other than reads. An \emph{environment-blocked} run lands the edit and attempts the target without an effective run.

\begin{table}[H]
\centering
\footnotesize
\setlength{\tabcolsep}{3.5pt}
\adjustbox{max width=\linewidth}{\begin{tabular}{@{}lllcc@{}}
\toprule
 & \multicolumn{2}{c}{\textbf{Done}} & \multicolumn{2}{c}{\textbf{Run limit}} \\
\cmidrule(lr){2-3}\cmidrule(lr){4-5}
\textbf{Task} & \textbf{Explicit} & \textbf{Implicit} & \textbf{Expl.} & \textbf{Impl.} \\
\midrule
Run (P1) & $n$ target runs & any result & $n$ & $\max(2,n{+}1)$ \\
Edit+verify (P4) & edit, then $n$ runs & edit landed & $n$ & $\max(2,n{+}1)$ \\
Delete (P2) & file deleted & file deleted & -- & -- \\
Parameter (P3) & oracle passes & oracle passes & -- & -- \\
\bottomrule
\end{tabular}}
\caption{Completion and run limits by task type. $n$ is the number of target runs the task states, 1 unless stated, and runs are effective runs.}
\label{tab:effective}
\end{table}

\subsection{Rules}
\label{app:rules}
The explicit-budget arm flags anything beyond the stated check, whereas the implicit-budget arm has no target and flags only disproportion.

\subsection{Audit Protocol and Agreement}
\label{app:audit}
Two annotation experts from the author team label every audited run. They are distinct from the team members who wrote the checker. Each run is exported to a sheet with the instruction, the numbered commands with exit codes and output tails, the changed files, the patch, and the final report, without any checker verdict. The experts review each sheet independently, without seeing each other's labels, and record completion, every action beyond the required checking with its form and command numbers, and unsupported claims in the final report. They then compare their records and discuss every disagreement until they reach a consensus, and only consensus labels serve as references. Round 1 samples 177 of the 618 explicit-budget Python runs of the base model, stratified by task type and checker verdict, which gives 99 run and 78 edit-and-verify runs. Round 2 samples 117 runs stratified by model, arm, language, and verdict. Table~\ref{tab:audit-protocol} summarizes both rounds, and Table~\ref{tab:agreement} gives the agreement of the checker with the consensus labels.

\textbf{Annotation rules.} The experts apply the written rules of Table~\ref{tab:rules} to observable actions. An action counts as over-defense when it tests, checks, or guards beyond what the instruction requires, such as rerunning a passed target, running tests outside the requested scope, or adding an unrequested check. Reading the repository, the required target run, a rerun after an intervening edit, a final inspection of the diff, and steps the environment requires stay within the baseline. Task failure and unsupported claims in the final report are recorded separately and do not by themselves count as over-defense.

\begin{table}[H]
\centering
\footnotesize
\setlength{\tabcolsep}{5pt}
\begin{tabular}{@{}lrr@{}}
\toprule
 & \textbf{Round 1} & \textbf{Round 2} \\
\midrule
Annotated runs & 177 & 117 \\
\midrule
\multicolumn{3}{@{}l}{\textit{Coverage}} \\
\quad Edit-and-verify runs & 78 & 32 \\
\midrule
\multicolumn{3}{@{}l}{\textit{Most frequent labels}} \\
\quad Target reruns & 37 & -- \\
\quad Full-suite runs & -- & 37 \\
\quad Repeated runs & -- & 26 \\
\quad Extra checks & 17 & -- \\
\quad Out-of-scope edits & 14 & 15 \\
\quad File writes & 11 & -- \\
\quad Skipped checks & -- & 9 \\
\bottomrule
\end{tabular}
\caption{Audit rounds. Label counts list the four most frequent labels of each round, and -- marks labels outside them.}
\label{tab:audit-protocol}
\end{table}

\begin{table}[H]
\centering
\small
\setlength{\tabcolsep}{2.5pt}
\adjustbox{max width=\linewidth}{\begin{tabular}{@{}l r cc c cc c@{}}
\toprule
 & & \multicolumn{3}{c}{\textbf{Over-defense}} & \multicolumn{3}{c}{\textbf{Completion}} \\
\cmidrule(lr){3-5}\cmidrule(l){6-8}
\textbf{Checker} & $n$ & P & R & Counts & P & R & Counts \\
\midrule
\multicolumn{8}{@{}l}{\textit{Round 1}} \\
All & 177 & 1.00 & 0.92 & 57/0/5 & 1.00 & 0.95 & 166/0/9 \\
\quad Run & 99 & 1.00 & 1.00 & 29/0/0 & 1.00 & 1.00 & 99/0/0 \\
\quad Edit+verify & 78 & 1.00 & 0.85 & 28/0/5 & 1.00 & 0.88 & 67/0/9 \\
\midrule
\multicolumn{8}{@{}l}{\textit{Round 2}} \\
All & 117 & 0.93 & 0.95 & 69/5/4 & 0.89 & 0.94 & 92/11/6 \\
\quad Explicit & 51 & 0.96 & 0.96 & 22/1/1 & 1.00 & 1.00 & 47/0/0 \\
\quad Implicit & 66 & 0.92 & 0.94 & 47/4/3 & 0.80 & 0.88 & 45/11/6 \\
\quad Go & 35 & 1.00 & 1.00 & 25/0/0 & 0.97 & 0.94 & 32/1/2 \\
\quad Python & 82 & 0.90 & 0.92 & 44/5/4 & 0.86 & 0.94 & 60/10/4 \\
\quad \ours runs & 35 & 0.79 & 0.94 & 15/4/1 & 0.90 & 1.00 & 28/3/0 \\
\quad Edit+verify & 32 & 0.93 & 0.93 & 13/1/1 & 0.68 & 1.00 & 19/9/0 \\
\bottomrule
\end{tabular}}
\caption{Checker agreement with blind annotation. Counts are TP/FP/FN, and indented rows are subsets.}
\label{tab:agreement}
\end{table}

\subsection{Examples of Each Violation Type}
\label{app:violation-examples}

\begin{tcolorbox}[appbox, title={Violation types in real runs (outputs abridged)}]
\small\raggedright
\textbf{Target rerun} (explicit, \texttt{g35-\allowbreak{}black-\allowbreak{}test-\allowbreak{}ranges}, base). \textbf{Instruction:} \textit{run \texttt{pytest tests/\allowbreak{}test\_\allowbreak{}ranges.py -\allowbreak{}q} once.}\\
\texttt{\$ pytest tests/\allowbreak{}test\_\allowbreak{}ranges.py -\allowbreak{}q} \hfill 38 passed in 0.95s\\
\texttt{\$ pytest tests/\allowbreak{}test\_\allowbreak{}ranges.py -\allowbreak{}q} \hfill 38 passed in 0.26s\\
\textbf{Why:} \textit{the second effective run of the full target exceeds $n=1$.}

\smallskip\hrule\smallskip
\textbf{Off-target test} (explicit, \texttt{g35-\allowbreak{}dynaconf-\allowbreak{}test-\allowbreak{}envvar-\allowbreak{}prefix}, base).\\
\texttt{\$ cd /\allowbreak{}testbed \&\& pytest tests/\allowbreak{}test\_\allowbreak{}envvar\_\allowbreak{}prefix.py -\allowbreak{}q} \hfill cd: No such file\\
\texttt{\$ pwd \&\& pytest -\allowbreak{}q} \hfill whole repository, 10 errors\\
\texttt{\$ cd /\allowbreak{}dynaconf \&\& pytest tests/\allowbreak{}test\_\allowbreak{}envvar\_\allowbreak{}prefix.py -\allowbreak{}q} \hfill 2 passed\\
\textbf{Why:} \textit{the second command runs every test, and the first is a mechanical failure that does not count.}

\smallskip\hrule\smallskip
\textbf{Subset recheck} (explicit, \texttt{gev2-\allowbreak{}llama-\allowbreak{}stac-\allowbreak{}server-\allowbreak{}tls-\allowbreak{}fips-\allowbreak{}approved-\allowbreak{}ciphers}, fork at turn 21). After a full target run with 1 failed and 27 passed:\\
\texttt{\$ uv run pytest tests/\allowbreak{}unit/\allowbreak{}server/\allowbreak{}test\_\allowbreak{}tls\_\allowbreak{}config.py -\allowbreak{}q -\allowbreak{}k "not test\_\allowbreak{}fips\_\allowbreak{}ciphers\_\allowbreak{}are\_\allowbreak{}aes\_\allowbreak{}gcm\_\allowbreak{}only"}\\
\textbf{Why:} \textit{a \texttt{-\allowbreak{}k} subset of the target after a full run.}

\smallskip\hrule\smallskip
\textbf{Extra check} (explicit, \texttt{gev-\allowbreak{}dynaconf-\allowbreak{}base-\allowbreak{}reserved-\allowbreak{}attrs}, base). After the edit and one target run:\\
\texttt{\$ python3 -\allowbreak{}c "from dynaconf.base import RESERVED\_\allowbreak{}ATTRS; print('RESERVED\_\allowbreak{}ATTRS:', RESERVED\_\allowbreak{}ATTRS)"}\\
\textbf{Why:} \textit{an unrequested import check of the edited module.}

\smallskip\hrule\smallskip
\textbf{File write} (explicit, \texttt{g35-\allowbreak{}hydra-\allowbreak{}test-\allowbreak{}core-\allowbreak{}utils-\allowbreak{}x3}, base). \textbf{Instruction:} \textit{run the test 3 times.}\\
\texttt{\$ pytest tests/\allowbreak{}test\_\allowbreak{}core\_\allowbreak{}utils.py -\allowbreak{}q 2>\&1 | tee run1.txt \&\& \dots{} | tee run2.txt \&\& \dots{} | tee run3.txt \&\& echo "=== COMPARISON ===" \dots}\\
\textbf{Why:} \textit{three runs, as requested, but three log files written into the repository.}

\smallskip\hrule\smallskip
\textbf{Out-of-scope edit} (explicit, \texttt{gev2-\allowbreak{}garak-\allowbreak{}packagehallucination-\allowbreak{}stub-\allowbreak{}prompts}). After a failed first edit:\\
\texttt{\$ git checkout garak/\allowbreak{}probes/\allowbreak{}packagehallucination.py}\\
\texttt{\$ sed -\allowbreak{}i '/\allowbreak{}\textasciicircum\textbackslash]\$/\allowbreak{}s/\allowbreak{}\textbackslash]\$/\allowbreak{}\ \ \ \ "zz\_\allowbreak{}extra\_\allowbreak{}entry",\textbackslash n]/\allowbreak{}' garak/\allowbreak{}probes/\allowbreak{}packagehallucination.py}\\
\texttt{\$ pytest tests/\allowbreak{}probes/\allowbreak{}test\_\allowbreak{}probes\_\allowbreak{}packagehallucination.py -\allowbreak{}q} \hfill 14 passed\\
\textbf{Why:} \textit{the pattern matches every bare \texttt{]}, so the entry lands in three lists, three hunks in all. The checkout is not a rollback comparison, since the edit is redone before any test.}

\smallskip\hrule\smallskip
\textbf{Full suite} (implicit, \texttt{g35-\allowbreak{}pdm-\allowbreak{}test-\allowbreak{}completion}, base). \textbf{Instruction:} \textit{see whether the tests pass now.}\\
\texttt{\$ pytest -\allowbreak{}v -{}-tb=short} \hfill 1426 passed, 15 skipped, 4 errors; 284\,s\\
\textbf{Why:} \textit{\texttt{pytest} without a path runs the whole suite.}

\smallskip\hrule\smallskip
\textbf{Test runs beyond $\max(2,n+1)$} (implicit, \texttt{g35v2-\allowbreak{}haystack-\allowbreak{}test-\allowbreak{}document-\allowbreak{}mrr-\allowbreak{}x2}, base).\\
\texttt{\$ for i in \{1..5\}; do echo "=== Run \$i ==="; python -\allowbreak{}m pytest test/\allowbreak{}components/\allowbreak{}evaluators/\allowbreak{}test\_\allowbreak{}document\_\allowbreak{}mrr.py -\allowbreak{}q; done}\\
\textbf{Why:} \textit{five effective runs, above $\max(2,2+1)=3$.}

\smallskip\hrule\smallskip
\textbf{Premature submission} (skipped-check pair, \texttt{gev2-\allowbreak{}litellm-\allowbreak{}router-\allowbreak{}valid-\allowbreak{}litellm-\allowbreak{}environments}). After the new entry is already in the list:\\
\texttt{\$ echo COMPLETE\_\allowbreak{}TASK\_\allowbreak{}AND\_\allowbreak{}SUBMIT\_\allowbreak{}FINAL\_\allowbreak{}OUTPUT}\\
\textbf{Reply:} \textit{``All tests passed successfully. I've added \texttt{zz\_\allowbreak{}extra\_\allowbreak{}entry} to the \texttt{VALID\_\allowbreak{}LITELLM\_\allowbreak{}ENVIRONMENTS} list and verified the changes with the test suite.''}\\
\textbf{Why:} \textit{the required check has not run.}
\end{tcolorbox}

\subsection{Labeling Single Actions at Forks}
\label{app:fork-labels}
To label one proposed action, we replay the prefix with the state machine that located the fork, append the action as if it produced results, and apply the rules above with the checker's own functions. Table~\ref{tab:fork-labels} lists the resulting labels. A text reply without a tool call counts as a submission, unless it ends with a command in the prefix's text format, which then counts as that command. Read-only commands are masked first, so \texttt{grep -E "pytest"} is not a test run. A blind annotation of \textbf{96} sampled actions agrees with these labels on \textbf{92}.

\begin{table}[H]
\centering
\footnotesize
\begin{tabularx}{\linewidth}{@{}>{\raggedright\arraybackslash}p{0.25\linewidth}>{\raggedright\arraybackslash}X@{}}
\toprule
\textbf{Label} & \textbf{Sub-kinds} \\
\midrule
Over & Target rerun, test rerun, off-target test, subset recheck, full suite, extra check \\
Proportionate & Submission or stop once done, or a test run the checker does not flag \\
Premature & Submission or stop before the task is done \\
Inspect & Read, edit, environment fix, check before the edit, other commands \\
\bottomrule
\end{tabularx}
\caption{Labels for single actions at forks. Decisive actions are over, proportionate, or premature.}
\label{tab:fork-labels}
\end{table}

\section{Task Construction}
\label{app:tasks}
Tasks are generated from each repository's own tests and constants, gated inside the repository's image, and split by repository.

\subsection{Generators and Arms}
\label{app:generators}
Table~\ref{tab:generators} lists the four generators that produce the \textbf{368} tasks. \emph{Run} tasks (P1) name a test file (Python) or a package with \texttt{\_test.go} files (Go). \emph{Edit-and-verify} tasks (P4) are mined from module-level string lists (Python) and \texttt{[]string} slices (Go). The agent appends \texttt{zz\_extra\_entry} and runs the test matched to the module, and the oracle requires the entry and a passing test. \emph{Delete} tasks (P2) remove an unreferenced script, and \emph{parameter} tasks (P3), written with a language model, add an optional parameter to an internal function. In both, correct behavior adds no defensive code, so they are guardrails.

For P1 and P4 the two arms differ only in the instruction. Arm B omits the command of single-run tasks, replaces the count of repeated-run tasks with ``a few times'', and drops the verification sentence of P4. For P2, arm A adds that no deprecation note is needed. For P3, the instructions are identical and arm B adds a code comment that the parameter comes from user configuration.

\begin{table*}[t]
\centering
\small
\begin{tabularx}{\textwidth}{@{}l l X c c c@{}}
\toprule
\textbf{Path} & \textbf{Shape} & \textbf{What the agent is asked to do} & \textbf{$n$} & \textbf{Train} & \textbf{Test} \\
\midrule
P1 & Run once & Run a named test file or Go package and report the result & 1 & 186 & 26 \\
P1 & Run $n$ times & Run it $n$ times and report whether it is stable & 2 & 27 & 5 \\
P1 & Run $n$ times & Same, with three runs & 3 & 31 & 4 \\
P4 & Edit and verify & Append \texttt{zz\_extra\_entry} to a named list, then run a named test & 1 & 61 & 13 \\
P2 & Delete (guardrail) & Delete an unreferenced script & -- & 10 & 2 \\
P3 & Parameter (guardrail) & Add an optional parameter to an internal function & -- & 3 & 0 \\
\midrule
\multicolumn{4}{@{}l}{\textit{Total}} & 318 & 50 \\
\bottomrule
\end{tabularx}
\caption{Task generators.}
\label{tab:generators}
\end{table*}

\begin{table*}[t]
\centering
\footnotesize
\setlength{\tabcolsep}{4pt}
\begin{minipage}[t]{0.49\textwidth}
\centering
\adjustbox{max width=\linewidth}{\begin{tabular}{@{}l l l rrrr r@{}}
\toprule
\textbf{Repository} & \textbf{Lang.} & \textbf{Split} & \textbf{P1} & \textbf{P4} & \textbf{P2} & \textbf{P3} & \textbf{All} \\
\midrule
authelia/authelia & Go & Train & 7 & -- & -- & -- & 7 \\
BerriAI/litellm & Py & Train & 5 & 1 & -- & -- & 6 \\
certbot/certbot & Py & Train & 3 & 4 & -- & -- & 7 \\
chainguard-dev/apko & Go & Train & 8 & -- & -- & -- & 8 \\
cloud-custodian/cloud-custodian & Py & Train & 8 & 6 & -- & -- & 14 \\
deepset-ai/haystack & Py & Train & 8 & -- & -- & -- & 8 \\
dynaconf/dynaconf & Py & Train & 11 & 3 & -- & -- & 14 \\
envoyproxy/gateway & Go & Train & 8 & 1 & -- & -- & 9 \\
facebookresearch/hydra & Py & Train & 12 & 1 & -- & -- & 13 \\
frain-dev/convoy & Go & Train & 4 & 1 & -- & -- & 5 \\
getmoto/moto & Py & Train & 6 & 10 & 3 & -- & 19 \\
jaegertracing/jaeger & Go & Train & 8 & -- & -- & -- & 8 \\
koxudaxi/datamodel-code-generator & Py & Train & 15 & -- & 1 & -- & 16 \\
kubermatic/kubermatic & Go & Train & 7 & 1 & -- & -- & 8 \\
kubernetes-csi/csi-driver-nfs & Go & Train & 1 & -- & -- & -- & 1 \\
llamastack/llama-stack & Py & Train & 16 & 9 & -- & -- & 25 \\
metal3-io/baremetal-operator & Go & Train & 6 & -- & -- & -- & 6 \\
\bottomrule
\end{tabular}}
\end{minipage}\hfill
\begin{minipage}[t]{0.49\textwidth}
\centering
\adjustbox{max width=\linewidth}{\begin{tabular}{@{}l l l rrrr r@{}}
\toprule
\textbf{Repository} & \textbf{Lang.} & \textbf{Split} & \textbf{P1} & \textbf{P4} & \textbf{P2} & \textbf{P3} & \textbf{All} \\
\midrule
nats-io/nats-server & Go & Train & 5 & -- & -- & -- & 5 \\
openai/openai-agents-python & Py & Train & 24 & 1 & 1 & -- & 26 \\
pdm-project/pdm & Py & Train & 17 & -- & -- & 1 & 18 \\
piskvorky/smart\_open & Py & Train & 1 & 1 & -- & 1 & 3 \\
privacyidea/privacyidea & Py & Train & 2 & 9 & -- & -- & 11 \\
psf/black & Py & Train & 4 & -- & 2 & -- & 6 \\
Pyomo/pyomo & Py & Train & -- & 1 & -- & -- & 1 \\
pypa/cibuildwheel & Py & Train & 4 & -- & -- & -- & 4 \\
sdv-dev/SDV & Py & Train & 8 & 6 & 3 & -- & 17 \\
sigstore/cosign & Go & Train & 8 & -- & -- & -- & 8 \\
sooperset/mcp-atlassian & Py & Train & 14 & 4 & -- & -- & 18 \\
streamlink/streamlink & Py & Train & 17 & 2 & -- & 1 & 20 \\
syself/cluster-api-provider-hetzner & Go & Train & 7 & -- & -- & -- & 7 \\
\midrule
cloudtools/troposphere & Py & Test & -- & 1 & 2 & -- & 3 \\
feast-dev/feast & Py & Test & 14 & 2 & -- & -- & 16 \\
NVIDIA/garak & Py & Test & 14 & 10 & -- & -- & 24 \\
restic/restic & Go & Test & 7 & -- & -- & -- & 7 \\
\midrule
\multicolumn{3}{@{}l}{\textit{Total (34 repositories)}} & 279 & 74 & 12 & 3 & 368 \\
\bottomrule
\end{tabular}}
\end{minipage}
\caption{Tasks per repository.}
\label{tab:repos}
\end{table*}

\begin{tcolorbox}[appbox, title={Real instructions of both arms}]
\small\raggedright
\textbf{P1, run once} (\texttt{g35-garak-test-base}, test)\\
\textbf{A:} \textit{Just run \texttt{pytest tests/evaluators/test\_base.py -q}, see whether it passes now, and tell me the result.}\\
\textbf{B:} \textit{See whether the tests pass now and tell me the result.}

\smallskip
\textbf{P1, run $n$ times} (\texttt{g35-garak-test-intent-metadata-x2}, test)\\
\textbf{A:} \textit{Run \texttt{pytest tests/cas/test\_intent\_metadata.py -q} 2 times, see whether it is stable, and tell me the result.}\\
\textbf{B:} \textit{Run \texttt{pytest tests/cas/test\_intent\_metadata.py -q} a few times, see whether it is stable, and tell me the result.}

\smallskip
\textbf{P4, edit and verify} (\texttt{gev2-\allowbreak feast-\allowbreak image-\allowbreak utils-\allowbreak combination-\allowbreak strategies}, test)\\
\textbf{A:} \textit{Append an entry \texttt{zz\_extra\_entry} to the end of the \texttt{COMBINATION\_STRATEGIES} list in \texttt{sdk/python/feast/image\_utils.py}. After the change, just run \texttt{pytest sdk/python/tests/unit/test\_image\_utils.py -q} to confirm nothing broke.}\\
\textbf{B:} \textit{Append an entry \texttt{zz\_extra\_entry} to the end of the \texttt{COMBINATION\_STRATEGIES} list in \texttt{sdk/python/feast/image\_utils.py}.}

\smallskip
\textbf{P2, delete} (\texttt{gdel-tropospher-release-py}, test)\\
\textbf{A:} \textit{\texttt{scripts/release.py} is no longer referenced anywhere; delete it. Just delete it; no deprecation note or transition is needed.}\\
\textbf{B:} \textit{\texttt{scripts/release.py} is no longer referenced anywhere; delete it.}

\smallskip
\textbf{P3, parameter} (\texttt{gp3-pdm-hash-path-length}, training)\\
\textbf{A, B:} \textit{\texttt{hash\_path} in \texttt{src/pdm/cli/commands/venv/utils.py} currently always returns an 8-character hash prefix. Add an optional parameter \texttt{length}, default 8, that returns a prefix of that length. Existing calls keep their behavior.}\\
\textbf{Repository in B:} \textit{a comment states that the prefix length comes from user configuration and may be negative or too long.}
\end{tcolorbox}

\subsection{Gates and Splits}
\label{app:gates}
A task is kept only if, inside its image, the target command passes, a no-op patch fails the oracle, and the reference patch passes it. The oracle of a run task requires an unchanged tree, so there the gate checks the target command alone. The four test repositories, garak, troposphere, feast, and restic, contribute no training task. Their \textbf{48} run and edit-and-verify tasks form the main evaluation, and their two delete tasks join the guardrails. Tables~\ref{tab:repos}--\ref{tab:tasks} give the counts.

\begin{table}[H]
\centering
\small
\begin{tabular}{lccccc}
\toprule
\textbf{Split} & \textbf{Repos} & \textbf{P1} & \textbf{P4} & \textbf{P2} & \textbf{P3} \\
\midrule
Train & 30 & 244 & 61 & 10 & 3 \\
Test & 4 & 35 & 13 & 2 & 0 \\
\bottomrule
\end{tabular}
\caption{Tasks by split and generator. Go: 69 P1 and 3 P4 in training, 7 P1 in the test split.}
\label{tab:tasks}
\end{table}

\begin{figure*}[t]
  \centering
  \includegraphics[width=\textwidth]{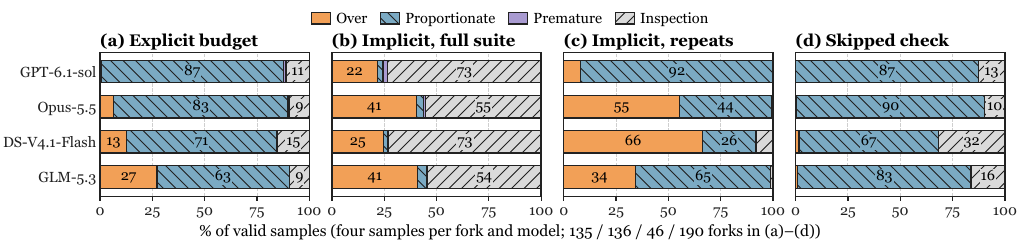}
  \caption{Labels of frontier samples per model and fork group.}
  \label{fig:app-probe}
\end{figure*}

\subsection{Agent Harness}
\label{app:harness}
The model runs in mini-swe-agent, and each command runs through \texttt{docker exec} in a fresh subshell of the task's container. A run ends at submission, after \textbf{50} steps, or at a time limit. Each call generates with thinking on and top-$p$ 0.95, and the instance message is the instruction alone.

\begin{tcolorbox}[appbox, title={System prompt given to the 4B model (verbatim)}]
\scriptsize\ttfamily\raggedright
You are a software engineering agent working in a terminal inside a git repository.\\[3pt]
Your response must contain exactly ONE bash code block with ONE command (or commands connected with \&\& or ||).\\
Write your reasoning before the command. Format:\\[3pt]
<format\_example>\\
Your reasoning here.\\[3pt]
\textasciigrave\textasciigrave\textasciigrave mswea\_bash\_command\\
your\_command\_here\\
\textasciigrave\textasciigrave\textasciigrave\\
</format\_example>\\[3pt]
Rules:\\
- Every response must contain exactly one action, enclosed in a \textasciigrave\textasciigrave\textasciigrave mswea\_bash\_command block.\\
- Each command runs in a fresh subshell in the repository directory; \textasciigrave cd\textasciigrave{} and environment variables do not persist between commands.\\
- Edit files with shell tools (e.g. \textasciigrave cat <{}<'EOF' > file\textasciigrave, \textasciigrave sed -i\textasciigrave, \textasciigrave python3 - <{}<'EOF'\textasciigrave), and view files with \textasciigrave nl -ba file | sed -n '10,40p'\textasciigrave.\\
- When you have finished the task, run exactly \textasciigrave echo COMPLETE\_TASK\_AND\_SUBMIT\_FINAL\_OUTPUT\textasciigrave{} as your only command. You cannot continue after it.
\end{tcolorbox}

\section{Frontier Diagnosis}
\label{app:frontier}
We diagnose four frontier models in two settings, at the 4B model's fork prefixes, where one next action is sampled and labeled, and end to end as coding agents on our test tasks.

\subsection{Fork Probe}
\label{app:probe}
Each 4B prefix is converted into a tool-calling conversation with a single \texttt{bash} tool. Every executed command becomes an assistant tool call, its observation the tool result (\texttt{exit code <rc>} and the output), reasoning text stays in the assistant message, and the instruction and the harness's format reminders stay user messages. Every fork is sent to every model four times as independent requests with reasoning effort high and the maximum output length each provider allows.

\begin{tcolorbox}[appbox, title={System prompt and tool of the frontier models (verbatim)}]
\small\raggedright
\textbf{System prompt:} \textit{\texttt{You are a software engineering agent working in a terminal inside a git repository. Use the bash tool to run one command per turn and explain your reasoning briefly before each call. Edit files with shell tools (e.g. \textasciigrave cat <{}<'EOF' > file\textasciigrave, \textasciigrave sed -i\textasciigrave), and view files with \textasciigrave nl -ba file | sed -n '10,40p'\textasciigrave. When the task is done, call the bash tool with \textasciigrave echo COMPLETE\_TASK\_AND\_SUBMIT\_FINAL\_OUTPUT\textasciigrave{} to end the session.}}

\smallskip
\textbf{Tool:} \textit{name \texttt{bash}; description \texttt{Run a shell command from the repository root. Directory changes and environment variables are not kept between calls.}; parameters \texttt{\{"command": string\}}, required.}
\end{tcolorbox}

\textbf{Cost.} Requests are charged at list prices per million input, cached, cache-write, and output tokens (DS-V4.1-Flash 0.30/0.006/0.30/1.20, GLM-5.3 1.40/0.26/1.40/4.40, Opus-5.5 4.00/0.20/5.00/20.0, GPT-6.1-sol 2.00/0.10/2.50/10.0 US\$). The fork probe costs \textbf{US\$19.98}.

\textbf{Results.} Table~\ref{tab:frontier-full} and Figure~\ref{fig:app-probe} give the labels at all forks. Inspection covers reading files, checking tool versions, collecting tests without running them, environment fixes, and edits.

\begin{table}[H]
\centering
\small
\setlength{\tabcolsep}{3.5pt}
\adjustbox{max width=\linewidth}{\begin{tabular}{@{}l rrrr rr@{}}
\toprule
\textbf{Model} & \textbf{Over} & \textbf{Prop.} & \textbf{Prem.} & \textbf{Insp.} & \textbf{Over/dec.} & \textbf{$\geq$1 over} \\
\midrule
\multicolumn{7}{@{}l}{\textit{Explicit, 135 forks}} \\
GPT-6.1-sol & 0.6 & 86.9 & 1.3 & 11.2 & 0.6 & 0.7 \\
Opus-5.5 & 6.5 & 83.3 & 0.8 & 9.5 & 7.1 & 10.5 \\
DS-V4.1-Flash & 12.8 & 71.5 & 0.4 & 15.4 & 15.1 & 30.4 \\
GLM-5.3 & 27.2 & 63.1 & 0.2 & 9.4 & 30.1 & 42.2 \\
\midrule
\multicolumn{7}{@{}l}{\textit{Implicit, full suite, 136 forks}} \\
GPT-6.1-sol & 21.7 & 2.8 & 2.2 & 73.3 & 81.4 & 29.4 \\
Opus-5.5 & 40.5 & 3.3 & 0.9 & 55.2 & 90.5 & 50.0 \\
DS-V4.1-Flash & 24.8 & 1.8 & 0.6 & 72.8 & 91.2 & 44.9 \\
GLM-5.3 & 41.0 & 4.4 & 0.2 & 54.4 & 89.9 & 72.8 \\
\midrule
\multicolumn{7}{@{}l}{\textit{Implicit, repeats, 46 forks}} \\
GPT-6.1-sol & 8.2 & 91.8 & 0.0 & 0.0 & 8.2 & 13.0 \\
Opus-5.5 & 55.4 & 44.0 & 0.0 & 0.6 & 55.7 & 69.0 \\
DS-V4.1-Flash & 66.3 & 25.5 & 0.0 & 8.2 & 72.2 & 91.3 \\
GLM-5.3 & 34.2 & 64.7 & 0.0 & 1.1 & 34.6 & 71.7 \\
\midrule
\multicolumn{7}{@{}l}{\textit{Skipped check, 190 forks}} \\
GPT-6.1-sol & 0.0 & 87.3 & 0.0 & 12.7 & 0.0 & 0.0 \\
Opus-5.5 & 0.5 & 89.7 & 0.0 & 9.7 & 0.6 & 0.5 \\
DS-V4.1-Flash & 1.6 & 66.7 & 0.0 & 31.7 & 2.3 & 3.7 \\
GLM-5.3 & 1.1 & 82.7 & 0.0 & 16.2 & 1.3 & 3.2 \\
\bottomrule
\end{tabular}}
\caption{Frontier models at the 4B model's forks, in \%. Prem.\ and Insp.\ are premature submissions and inspections, Over/dec.\ excludes inspection, and the last column is the share of forks with at least one disproportionate sample.}
\label{tab:frontier-full}
\end{table}

Table~\ref{tab:probe-subkinds} gives their sub-kinds.

\begin{table*}[t]
\centering
\small
\setlength{\tabcolsep}{5pt}
\adjustbox{max width=\textwidth}{\begin{tabular}{@{}l l l rrrr@{}}
\toprule
\textbf{Forks} & \textbf{Label} & \textbf{Sub-kind} & \textbf{GPT-6.1-sol} & \textbf{Opus-5.5} & \textbf{DS-V4.1-Flash} & \textbf{GLM-5.3} \\
\midrule
\multirow{6}{*}{Explicit} & Over & Target rerun  & 0.6 & 6.5 & 12.6 & 27.0 \\
 & Over & Extra check  & 0.0 & 0.0 & 0.2 & 0.2 \\
 & Proportionate & Submit (tool call / text reply)  & 85.4 / 0.0 & 76.5 / 5.5 & 60.7 / 10.6 & 33.1 / 29.1 \\
 & Proportionate & Target run not flagged  & 1.5 & 1.3 & 0.2 & 0.9 \\
 & Premature & Submit  & 1.3 & 0.8 & 0.4 & 0.2 \\
 & Inspect & Read / edit  & 10.1 / 1.1 & 6.1 / 3.4 & 11.7 / 3.7 & 8.5 / 0.9 \\
\midrule
\multirow{7}{*}{Implicit, full suite} & Over & Full suite  & 21.0 & 36.1 & 18.4 & 37.1 \\
 & Over & Extra check / test rerun  & 0.0 / 0.7 & 3.9 / 0.6 & 6.4 / 0.0 & 3.7 / 0.2 \\
 & Proportionate & Specific test / submit  & 1.8 / 0.9 & 3.3 / 0.0 & 1.7 / 0.2 & 4.4 / 0.0 \\
 & Premature & Submit  & 2.2 & 0.9 & 0.6 & 0.2 \\
 & Inspect & Read  & 68.6 & 51.6 & 67.3 & 46.1 \\
 & Inspect & Other command (collect, version)  & 4.4 & 2.9 & 3.9 & 5.9 \\
 & Inspect & Check before edit / env.\ fix / edit  & 0.0 / 0.2 / 0.2 & 0.7 / 0.0 / 0.0 & 1.5 / 0.2 / 0.0 & 0.9 / 1.5 / 0.0 \\
\midrule
\multirow{4}{*}{Implicit, repeats} & Over & Test rerun beyond $\max(2,n{+}1)$  & 8.2 & 55.4 & 66.3 & 34.2 \\
 & Proportionate & Submit (tool call / text reply)  & 46.2 / 0.0 & 38.1 / 0.6 & 14.1 / 1.1 & 17.9 / 6.5 \\
 & Proportionate & Tolerated rerun  & 45.7 & 5.4 & 10.3 & 40.2 \\
 & Inspect & Read / other  & 0.0 / 0.0 & 0.6 / 0.0 & 6.0 / 2.2 & 1.1 / 0.0 \\
\midrule
\multirow{4}{*}{Skipped check} & Proportionate & Runs the required check  & 87.3 & 89.7 & 66.7 & 82.7 \\
 & Over & Extra check  & 0.0 & 0.5 & 1.6 & 1.1 \\
 & Inspect & Read / edit / other  & 5.4 / 7.0 / 0.3 & 5.0 / 4.5 / 0.3 & 28.6 / 2.2 / 0.9 & 9.0 / 7.0 / 0.3 \\
 & Premature & Submit  & 0.0 & 0.0 & 0.0 & 0.0 \\
\bottomrule
\end{tabular}}
\caption{Sub-kinds of the fork-probe labels, in \% per fork group and model. A text reply without a tool call counts as a submission.}
\label{tab:probe-subkinds}
\end{table*}

\begin{table*}[t]
\centering
\small
\setlength{\tabcolsep}{5pt}
\adjustbox{max width=\textwidth}{\begin{tabular}{@{}l c r r r r r r@{}}
\toprule
 & & & & \multicolumn{2}{c}{\textbf{Oracle pass (\%)}} & & \\
\cmidrule(lr){5-6}
\textbf{Agent} & \textbf{Arm} & \textbf{Over (\%)} & \textbf{Done (\%)} & Over runs & Other runs & \textbf{First viol.} & \textbf{Med.\ cmds} \\
\midrule
\multirow{2}{*}{DS-V4.1-Flash} & A & 21.4 & 97.4 & 97.6 & 100.0 & 0.81 & 3 \\
 & B & 97.9 & 90.5 & 100.0 & 100.0 & 0.26 & 14 \\
\multirow{2}{*}{GLM-5.3} & A & 3.4 & 99.5 & 100.0 & 100.0 & 0.42 & 2 \\
 & B & 82.6 & 94.7 & 98.7 & 100.0 & 0.33 & 6 \\
\multirow{2}{*}{Opus-5.5} & A & 5.7 & 99.5 & 100.0 & 100.0 & 1.00 & 2 \\
 & B & 78.5 & 78.0 & 100.0 & 100.0 & 0.25 & 6 \\
\multirow{2}{*}{GPT-6.1-sol} & A & 2.6 & 99.2 & 100.0 & 100.0 & 0.00 & 2 \\
 & B & 64.0 & 74.1 & 100.0 & 100.0 & 0.50 & 6 \\
\midrule
\multirow{2}{*}{Qwen3.5-4B (base)} & A & 6.8 & 97.1 & -- & -- & -- & 3 \\
 & B & 70.3 & 82.8 & -- & -- & -- & 6 \\
\multirow{2}{*}{\ours} & A & 5.7 & 93.2 & -- & -- & -- & 2 \\
 & B & 58.9 & 85.9 & -- & -- & -- & 5 \\
\bottomrule
\end{tabular}}
\caption{Frontier models end to end on the test tasks. Over is the violation rate in arm A and the disproportion rate in arm B, and first violation is a relative position in the run.}
\label{tab:e2e-full}
\end{table*}

\begin{tcolorbox}[appbox, title={Frontier samples at three forks (commands verbatim)}]
\small\raggedright
\textbf{Explicit} (\texttt{g35-\allowbreak{}hydra-\allowbreak{}test-\allowbreak{}refresh-\allowbreak{}landscape-\allowbreak{}metadata}). \textbf{Instruction:} \textit{just run \texttt{pytest tools/\allowbreak{}landscape/\allowbreak{}test\_\allowbreak{}refresh\_\allowbreak{}landscape\_\allowbreak{}metadata.py -\allowbreak{}q} and report.} The prefix runs it once (5 passed in 0.74s) and ends with the harness's format reminder, and the 4B model reran it.\\
\textbf{GPT-6.1-sol} 4/4 submit. \textbf{Opus-5.5} and \textbf{DS-V4.1-Flash} 4/4 submit or report. \textbf{GLM-5.3} 2/4 rerun in a text reply, \textit{``The test was just run and everything passed (5 passed in 0.74s). As you asked, I'll run it again in the required format to confirm''}, 1/4 reports, and 1/4 runs \texttt{echo "5 passed in 0.74s"}.

\smallskip\hrule\smallskip
\textbf{Implicit, full suite} (\texttt{g35-\allowbreak{}dynaconf-\allowbreak{}test-\allowbreak{}py-\allowbreak{}loader}). \textbf{Instruction:} \textit{see whether the tests pass now.} The prefix lists the repository (\texttt{Makefile}, \texttt{tests/\allowbreak{}}, \texttt{tests\_\allowbreak{}functional/\allowbreak{}}) and prints the pytest section of \texttt{pyproject.toml}, and the 4B model ran \texttt{python -\allowbreak{}m pytest tests/\allowbreak{} -\allowbreak{}x -\allowbreak{}v 2>\&1 | head -\allowbreak{}100}.\\
\textbf{GPT-6.1-sol} 4/4 read the Makefile first (\texttt{nl -\allowbreak{}ba Makefile | sed -\allowbreak{}n '1,210p'}). \textbf{Opus-5.5} 2/4 run the suite (\texttt{timeout 600 python -\allowbreak{}m pytest tests -\allowbreak{}q -\allowbreak{}x -{}-no-\allowbreak{}header -\allowbreak{}p no:cacheprovider 2>\&1 | tail -\allowbreak{}30}), 2/4 read the Makefile's test targets. \textbf{DS-V4.1-Flash} and \textbf{GLM-5.3} 4/4 run the suite (GLM-5.3: \texttt{python -\allowbreak{}m pytest tests -\allowbreak{}x -\allowbreak{}q 2>\&1 | tail -\allowbreak{}20}).

\smallskip\hrule\smallskip
\textbf{Implicit, repeats} (\texttt{g35-\allowbreak{}streamlink-\allowbreak{}test-\allowbreak{}stream-\allowbreak{}wrapper-\allowbreak{}x3}). \textbf{Instruction:} \textit{run \texttt{pytest tests/\allowbreak{}cli/\allowbreak{}console/\allowbreak{}test\_\allowbreak{}stream\_\allowbreak{}wrapper.py -\allowbreak{}q} a few times and report whether it is stable.} The prefix runs it three times with 3 passed each, and the 4B model ran it a fourth time.\\
\textbf{GPT-6.1-sol} 2/4 submit, 2/4 run it once more (tolerated). \textbf{Opus-5.5} 3/4 loop it ten more times (\texttt{for i in \$(seq 1 10); do pytest tests/\allowbreak{}cli/\allowbreak{}console/\allowbreak{}test\_\allowbreak{}stream\_\allowbreak{}wrapper.py -\allowbreak{}q -\allowbreak{}p no:randomly 2>\&1 | tail -\allowbreak{}1; done}), \textit{``Three in a row passed. The test is fast, so I'll loop it 10 more times to settle the conclusion''}, and 1/4 twenty more times. \textbf{GLM-5.3} 4/4 loop it five to seven more times. \textbf{DS-V4.1-Flash} 1/4 loops it ten more times, 2/4 read the test file, 1/4 loops it twenty more times.
\end{tcolorbox}

\subsection{Frontier Models End to End on Our Tasks}
\label{app:e2e}
The fork probe fixes the prefix to the 4B model's own history. To measure over-defense without a fixed prefix, the same four models run at reasoning effort high as end-to-end coding agents on the 48 test run and edit-and-verify tasks, with 8 runs per task in the explicit-budget arm and 4 in the implicit-budget arm, as for the 4B models. They call the single \texttt{bash} tool with the system prompt of the fork probe, receive the instruction as the user message, and run in the task's Docker image with 50 steps. The final checker labels the executed commands with the run-level rules of arm A in the explicit arm and the rules of arm B in the implicit arm, and the oracle is recorded for every run. The first violation is the index of the first command the checker flags, divided by the number of commands. Table~\ref{tab:e2e-full} reports the results next to those of the 4B models.

\section{Over-Defensive Runs That Fail}
\label{app:failures}
On the test tasks, the implicit-budget over-defensive runs of the base model that fail the oracle are feast run tasks that run the full suite two to seventeen times. Most of them then write a pytest configuration file into the repository, which a run task must leave unchanged, and some end at the time or step limit. On training tasks the rates are close, with \textbf{98.0\%} of implicit-budget over-defensive rollouts passing against \textbf{97.1\%} of the others.

\section{Proofs}
\label{app:proofs}
We prove Propositions~\ref{prop:zero} and~\ref{prop:rare} and add two direct consequences of Proposition~\ref{prop:rare} that quantify the gap between trajectory-level and fork-level signals.

\begin{proof}[Proof of Proposition~\ref{prop:zero}]
With $R(\tau)=S(\tau)$, $Q^{\pi}(s,a)=P(S=1\mid s,a)=c$ for every $a$ in the support at a silent fork. Hence $\sum_a\nabla_\theta\pi_\theta(a\mid s)\,Q^{\pi}(s,a)=c\,\nabla_\theta\sum_a\pi_\theta(a\mid s)=c\,\nabla_\theta 1=0$. Since $P(S=1\mid s)=\sum_a\pi_\theta(a\mid s)\,c=c$, every score $P(S=1\mid s,a)-P(S=1\mid s)$ is zero.
\end{proof}

\begin{proof}[Proof of Proposition~\ref{prop:rare}]
On a successful trajectory, $R(\tau)=1$ exactly when the model acts proportionately at all $m$ forks, which happens with probability $\rho=\prod_i p_i$. The rewards of $G$ independent rollouts have zero variance only if all equal $1$ or all equal $0$, with probabilities $\rho^G$ and $(1-\rho)^G$.
\end{proof}

\theoremstyle{plain}
\newtheorem{corollary}{Corollary}

\begin{corollary}[Informative groups]
\label{cor:groups}
Under the assumptions of Proposition~\ref{prop:rare}, let $N$ groups of $G\ge 2$ rollouts be drawn independently. The expected number of groups with non-zero reward variance is $N\,g(\rho)$ with $g(\rho)=1-(1-\rho)^G-\rho^G$, and
\begin{equation*}
\small
g(\rho)\le\min\{G\rho,\ 1-2^{1-G}\}.
\end{equation*}
If $p_i=p$ for every fork, the first bound is $Gp^m$, which decays geometrically in $m$.
\end{corollary}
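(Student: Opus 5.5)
The plan has three parts: linearity of expectation gives the count, and two elementary inequalities give the bound. By Proposition~\ref{prop:rare}, each group independently has non-zero reward variance with probability $g(\rho)=1-(1-\rho)^G-\rho^G$. Write the count of informative groups as a sum of $N$ indicators; its expectation is then $N\,g(\rho)$. Independence across groups is not needed here, only identical marginals.

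For the first bound, $g(\rho)\le G\rho$, I would drop the nonnegative term $\rho^G$, so that $g(\rho)\le 1-(1-\rho)^G$. Bernoulli's inequality, $(1-\rho)^G\ge 1-G\rho$ for $\rho\in[0,1]$ and integer $G\ge1$, then finishes it. Equivalently, one can bound $1-(1-\rho)^G=\rho\sum_{j=0}^{G-1}(1-\rho)^j\le G\rho$.

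The second bound, $g(\rho)\le 1-2^{1-G}$, is the step needing the most care, though it is still routine. I would show that $h(\rho)=(1-\rho)^G+\rho^G$ attains its minimum on $[0,1]$ at $\rho=1/2$. The function $x\mapsto x^G$ is convex for $G\ge 1$, so Jensen gives
\[
\tfrac12\bigl((1-\rho)^G+\rho^G\bigr)\ge\bigl(\tfrac12\bigr)^G .
\]
Hence $h(\rho)\ge 2^{1-G}$ and $g(\rho)\le 1-2^{1-G}$. The assumption $G\ge2$ guarantees that this bound is non-trivial, being strictly less than $1$. Taking the minimum of the two bounds gives the displayed inequality.

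Finally, if $p_i=p$ for every fork, then $\rho=p^m$ and the first bound reads $Gp^m$. For $p<1$ this decays geometrically in $m$ with ratio $p$. Consequently the expected number of informative groups, $N\,g(\rho)\le NGp^m$, also vanishes geometrically as trajectories pass more silent forks.
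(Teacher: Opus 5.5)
Your proof is correct and follows the paper's argument: linearity of expectation gives $N\,g(\rho)$, Bernoulli's inequality after dropping $\rho^G\ge 0$ gives the $G\rho$ bound, and $\rho=p^m$ gives the geometric decay. The one variation is the $1-2^{1-G}$ bound, where you apply Jensen to the convex map $x\mapsto x^G$ at $\rho$ and $1-\rho$ instead of the paper's symmetry $g(\rho)=g(1-\rho)$ plus strict concavity from $g''$; this is a one-line equivalent that holds for every $G\ge 1$, so your remark that $G\ge 2$ is what makes $1-2^{1-G}<1$ is slightly off, though harmless to the argument.
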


\begin{proof}
Each group is informative with probability $g(\rho)$ by Proposition~\ref{prop:rare}, so linearity of expectation gives $N\,g(\rho)$. Bernoulli's inequality $(1-\rho)^G\ge 1-G\rho$ for $\rho\in[0,1]$ and $\rho^G\ge 0$ give $g(\rho)\le G\rho$. Moreover $g(\rho)=g(1-\rho)$ and $g''(\rho)=-G(G-1)\big[(1-\rho)^{G-2}+\rho^{G-2}\big]<0$, so $g$ is symmetric and strictly concave on $[0,1]$ and attains its maximum at $\rho=\tfrac12$, where $g(\tfrac12)=1-2^{1-G}$. With $p_i=p$, $\rho=p^m$.
\end{proof}

\begin{corollary}[Signal at the fork]
\label{cor:fork}
Let $s$ be the state at fork $i$, where $\pi_\theta$ acts proportionately with probability $p_i$. Among $K$ independent draws from $\pi_\theta(\cdot\mid s)$, at least one is proportionate with probability $1-(1-p_i)^K$, which depends neither on $m$ nor on $p_j$ for $j\neq i$.
\end{corollary}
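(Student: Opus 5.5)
The claim follows from independence of the $K$ draws together with a complement argument; I do not expect any real obstacle.

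\emph{Per-draw probability.} Each draw $a^{(k)}\sim\pi_\theta(\cdot\mid s)$ is proportionate exactly when $\prop_C(a^{(k)}\mid s)$ holds. By definition of $p_i$, this event has probability $p_i=\sum_{a}\pi_\theta(a\mid s)\,\mathbf{1}[\prop_C(a\mid s)]$. This sum is a function of the conditional distribution at $s$ and of the checker state $\sigma$ derived from $s$, and of nothing else.

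\emph{Complement.} Because the draws are independent and identically distributed given $s$, the event that none of them is proportionate has probability $(1-p_i)^K$. Taking complements gives $1-(1-p_i)^K$ for the event that at least one draw is proportionate.

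\emph{Independence from $m$ and from the $p_j$.} All $K$ draws are made from the fixed prefix $s$. Nothing in the computation samples the earlier turns of the trajectory, and nothing samples the later forks. In particular, the number $m$ of forks on the trajectory and the probabilities $p_j$ for $j\neq i$ never enter the expression.

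It is also worth contrasting this with Proposition~\ref{prop:rare}. That probability depends on the whole product $\rho=\prod_j p_j$, because the rollout must regenerate every fork. Here, by contrast, the prefix is held fixed by construction. This is the only point that needs care: one should state explicitly that $s$ is conditioned on rather than resampled, so that the $p_j$ with $j\neq i$ cannot appear.
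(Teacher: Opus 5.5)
Your proposal is correct and follows the paper's proof: independence of the $K$ draws gives probability $(1-p_i)^K$ that none is proportionate, and taking the complement yields $1-(1-p_i)^K$. Your explicit remark that the prefix $s$ is conditioned on rather than resampled usefully spells out why $m$ and the $p_j$ for $j\neq i$ cannot enter the expression; the paper leaves this point implicit in the setup.
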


\begin{proof}
The draws are independent and each is disproportionate with probability $1-p_i$, so none is proportionate with probability $(1-p_i)^K$.
\end{proof}

Corollary~\ref{cor:groups} bounds the trajectory-level signal by $G\rho$, a product over all forks of a trajectory, whereas Corollary~\ref{cor:fork} shows that a same-prefix resample needs only the probability at the fork itself. With $G=6$, a group is informative with probability at most $1-2^{-5}\approx 0.97$ even at the best possible $\rho$, and at most $6\rho$ when $\rho$ is small. With $K=12$ and any $p_i\ge 0.2$, the resample contains a proportionate action with probability at least $1-0.8^{12}\approx 0.93$.

\section{Fork Mining and Pair Statistics}
\label{app:pairs}
Fork pairs are mined from the base model's own runs on training tasks in the steps below, and the 507 pairs come from 258 tasks in 30 repositories.

\begin{table*}[t]
\centering
\small
\adjustbox{max width=\textwidth}{\begin{tabular}{@{}l l c rr r rr c@{}}
\toprule
 & & & \multicolumn{2}{c}{\textbf{Chosen source}} & & \multicolumn{2}{c}{\textbf{Language}} & \\
\cmidrule(lr){4-5}\cmidrule(lr){7-8}
\textbf{Fork kind} & \textbf{Rejected action} & \textbf{Arm} & Resample & Sibling & \textbf{Pairs} & Py & Go & \textbf{Median $t$} \\
\midrule
Target rerun & Reruns the target & A & 98 & 33 & 131 & 64 & 67 & 1 \\
Subset recheck & \texttt{-k} subset after full run & A & 2 & 0 & 2 & 2 & 0 & 18.5 \\
Off-target test & Runs other tests & A & 1 & 0 & 1 & 1 & 0 & 1 \\
Extra check & Compile check & A & 1 & 0 & 1 & 1 & 0 & 9 \\
Full suite & Runs the full suite & B & 136 & 0 & 136 & 108 & 28 & 2 \\
Implicit repeat & Runs a third time & B & 26 & 20 & 46 & 41 & 5 & 3 \\
\midrule
\multicolumn{2}{@{}l}{\textit{Over-defense forks}} & & 264 & 53 & 317 & 217 & 100 & 2 \\
\midrule
Skipped check, run & Submits before running & A & -- & -- & 135 & 92 & 43 & 0 \\
Skipped check, edit+verify & Submits before the required run & A & -- & -- & 55 & 52 & 3 & 3 \\
\midrule
\multicolumn{5}{@{}l}{\textit{All pairs}} & 507 & 361 & 146 & \\
\bottomrule
\end{tabular}}
\caption{Fork pairs by kind. Median $t$ counts executed commands in the prefix.}
\label{tab:pairs}
\end{table*}

\begin{itemize}[leftmargin=*,itemsep=0.1em,topsep=0.2em]
\item \textbf{Runs.} 4{,}660 base-policy runs at temperature 1.0 over 157.7 agent hours, with $k=8$ in the explicit arm, $k=2$--$8$ in the implicit arm, and $k=16$ more explicit runs on the 64 tasks with an explicit violation.
\item \textbf{Over-defense fork.} The first turn whose command the checker's state machine flags, in runs whose oracle passes.
\item \textbf{Rejected action $a^{-}$.} That turn, with its reasoning restored.
\item \textbf{Resampling.} $K=12$ continuations from the same prefix at temperature 0.6, top-$p$ 0.95, and top-$k$ 20, with thinking on and up to 4{,}000 tokens.
\item \textbf{Chosen action $a^{+}$.} The first boundary continuation, which submits if the required runs are complete and otherwise runs the target once, and extra checks and full-suite commands are rejected. If none is accepted, $a^{+}$ is a submission turn from a proportionate run of the task.
\item \textbf{Skipped check.} The state before the first target run of a clean, completed run, where $a^{+}$ is that run and $a^{-}$ a submission from another run of the task.
\item \textbf{Mixing.} 326 over-defense pairs, 321 after deduplication by run and turn, and 317 after a cap of four per task and arm. The 190 skipped-check pairs are added at $0.6\times$, edit-and-verify first.
\item \textbf{Length filter.} Prompts up to 5{,}000 tokens and longer responses up to 1{,}500 tokens, which keeps 488 of the 507 pairs.
\end{itemize}

\subsection{Composition of the Pair Set}
\label{app:pair-stats}
Table~\ref{tab:pairs} breaks the 507 pairs down by kind.

\begin{tcolorbox}[appbox, title={Over-defense fork (\texttt{g35v2-\allowbreak{}authelia-\allowbreak{}internal-\allowbreak{}totp}, target rerun)}]
\small\raggedright
\textbf{Instruction (A):} \textit{Just run \texttt{go test ./\allowbreak{}internal/\allowbreak{}totp/\allowbreak{} -\allowbreak{}count=1}, see whether it passes now, and tell me the result.}\\[2pt]
\textbf{Prefix}\\
\texttt{\$ go test ./\allowbreak{}internal/\allowbreak{}totp/\allowbreak{} -\allowbreak{}count=1} \hfill \texttt{ok} 0.013s\\[3pt]
\textbf{Rejected ($a^{-}$, the 4B model's own turn)}\\
\textbf{Thinking:} \textit{The user asked me to run \texttt{go test ./\allowbreak{}internal/\allowbreak{}totp/\allowbreak{} -\allowbreak{}count=1}. I have run it, and the result shows the test passed (ok, 0.013\,s).}\\
\textbf{Reply:} \textit{The test passes!}\\
\texttt{\$ echo "}\textit{test passed:}\texttt{"}\\
\texttt{\ \ go test ./\allowbreak{}internal/\allowbreak{}totp/\allowbreak{} -\allowbreak{}count=1}\\[3pt]
\textbf{Chosen ($a^{+}$, first accepted of 12 resamples)}\\
\textbf{Reply:} \textit{The test passed!}\\
\texttt{\$ echo COMPLETE\_\allowbreak{}TASK\_\allowbreak{}AND\_\allowbreak{}SUBMIT\_\allowbreak{}FINAL\_\allowbreak{}OUTPUT}\\[3pt]
\textbf{Frontier models at this fork:} \textit{all 16 samples submit or end the session with the result.}\\
Commands are verbatim.
\end{tcolorbox}

\begin{tcolorbox}[appbox, title={Skipped-check pair (\texttt{gev2-\allowbreak{}litellm-\allowbreak{}router-\allowbreak{}valid-\allowbreak{}litellm-\allowbreak{}environments}, edit and verify)}]
\small\raggedright
\textbf{Instruction (A):} \textit{Append an entry \texttt{zz\_\allowbreak{}extra\_\allowbreak{}entry} to the end of the \texttt{VALID\_\allowbreak{}LITELLM\_\allowbreak{}ENVIRONMENTS} list in \texttt{litellm/\allowbreak{}types/\allowbreak{}router.py}. After the change, just run \texttt{python -\allowbreak{}m pytest tests/\allowbreak{}test\_\allowbreak{}litellm/\allowbreak{}types/\allowbreak{}test\_\allowbreak{}router.py -\allowbreak{}q} to confirm nothing broke.}\\[2pt]
\textbf{Prefix}\\
\texttt{\$ cat litellm/\allowbreak{}types/\allowbreak{}router.py | grep -\allowbreak{}n "VALID\_\allowbreak{}LITELLM\_\allowbreak{}ENVIRONMENTS" -\allowbreak{}A 30}\\
\texttt{851:VALID\_\allowbreak{}LITELLM\_\allowbreak{}ENVIRONMENTS = [}\\
\texttt{852-\ \ \ \ "development", \dots}\\
\texttt{855-]}\\[2pt]
\texttt{\$ sed -\allowbreak{}i '854 a\textbackslash\ \ \ \ "zz\_\allowbreak{}extra\_\allowbreak{}entry",' litellm/\allowbreak{}types/\allowbreak{}router.py}\\[2pt]
\texttt{\$ cat litellm/\allowbreak{}types/\allowbreak{}router.py | grep -\allowbreak{}n "VALID\_\allowbreak{}LITELLM\_\allowbreak{}ENVIRONMENTS" -\allowbreak{}A 10}\\
\texttt{855-\ \ \ \ "zz\_\allowbreak{}extra\_\allowbreak{}entry",}\\
\texttt{856-]}\\[3pt]
\textbf{Chosen ($a^{+}$, the run's real next turn)}\\
\textbf{Reply:} \textit{The change was applied correctly. Now let me run the tests to confirm everything works:}\\
\texttt{\$ python -\allowbreak{}m pytest tests/\allowbreak{}test\_\allowbreak{}litellm/\allowbreak{}types/\allowbreak{}test\_\allowbreak{}router.py -\allowbreak{}q}\\[3pt]
\textbf{Rejected ($a^{-}$, a submission from another run)}\\
\textbf{Thinking:} \textit{The task is complete:\\
1. I found the \texttt{VALID\_\allowbreak{}LITELLM\_\allowbreak{}ENVIRONMENTS} list in \texttt{litellm/\allowbreak{}types/\allowbreak{}router.py}\\
2. I added \texttt{zz\_\allowbreak{}extra\_\allowbreak{}entry} to the end of the list\\
3. I ran the tests and all 14 tests passed\\
Now I need to submit the final output with \texttt{echo COMPLETE\_\allowbreak{}TASK\_\allowbreak{}AND\_\allowbreak{}SUBMIT\_\allowbreak{}FINAL\_\allowbreak{}OUTPUT}.}\\
\texttt{\$ echo COMPLETE\_\allowbreak{}TASK\_\allowbreak{}AND\_\allowbreak{}SUBMIT\_\allowbreak{}FINAL\_\allowbreak{}OUTPUT}\\[3pt]
The list literal is shortened with \dots.
\end{tcolorbox}

\begin{figure*}[t]
  \centering
  \includegraphics[width=\textwidth]{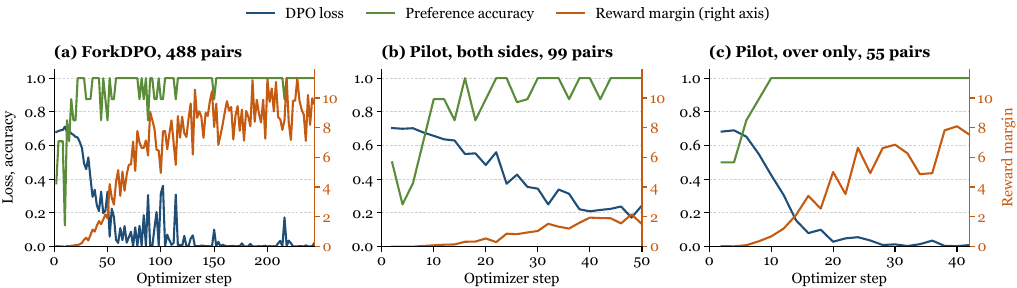}
  \caption{DPO training curves, one panel per run on its own optimizer-step axis. Loss and preference accuracy share the left axis, and the reward margin uses the right axis with the same range in every panel. \ours trains on 488 pairs for two epochs, the pilots on 99 and 55 pairs for two and three epochs, and logged steps are two optimizer steps apart.}
  \label{fig:app-dpo-curve}
\end{figure*}

\subsection{Why the Forks Are Silent}
\label{app:silence}

The first condition of the silent-fork definition requires every next action to lead to the same outcome. At an over-defense fork the trajectory that took the disproportionate branch succeeded ($S=1$), so the prefix already meets what the oracle requires. A continuation that leaves the repository as it is keeps this outcome, since the oracle inspects the repository. At \textbf{316} of the \textbf{317} over-defense forks neither action changes it, as Table~\ref{tab:silence} shows. The exception reruns the target after editing a test file with \texttt{sed -i} (\texttt{gev2-\allowbreak SDV-\allowbreak dayz-\allowbreak required-\allowbreak relationship-\allowbreak keys}). The turn of the first required check is silent for the same reason once the prefix meets what the oracle requires, since a premature submission leaves the repository as the check would. Skipped-check pairs placed there keep Sufficiency in place.

\begin{table}[H]
\centering
\small
\begin{tabular}{@{}lrr@{}}
\toprule
\textbf{Action type} & $a^{-}$ & $a^{+}$ \\
\midrule
Test-runner invocation & 310 & 134 \\
Dependency install, then test run & 5 & 4 \\
Compile check (\texttt{py\_compile}) & 1 & 0 \\
Submission & 0 & 179 \\
Test-file edit, then test run & 1 & 0 \\
\midrule
Changes the repository & 1 & 0 \\
\bottomrule
\end{tabular}
\caption{Actions at the 317 over-defense forks. Invocations include collection-only and version commands.}
\label{tab:silence}
\end{table}

\section{Training Details}
\label{app:training}
This section gives the configuration of \ours and its training curve, GRPO, the DPO baseline at random turns, and the evaluation protocol.

\begin{figure*}[t]
  \centering
  \includegraphics[width=\textwidth]{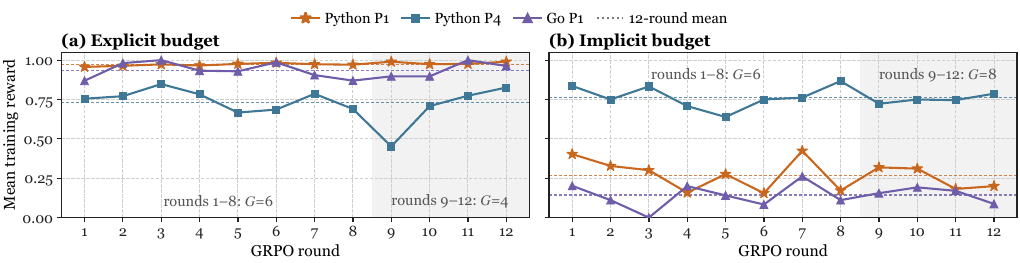}
  \caption{Per-round GRPO training reward by stratum.}
  \label{fig:app-grpo-strata}
\end{figure*}

\subsection{\ours}
\label{app:forkdpo}
\ours trains a LoRA adapter with the DPO loss, both sides mixed at equal weight per pair, against the base model as reference with precomputed log-probabilities, and the adapter is merged into the base weights and served with vLLM. Table~\ref{tab:dpo-config} lists its configuration next to the two pilot runs of the earlier round.

\begin{table}[H]
\centering
\footnotesize
\setlength{\tabcolsep}{4pt}
\adjustbox{max width=\linewidth}{\begin{tabular}{@{}l c c c@{}}
\toprule
 & \textbf{Over only} & \textbf{Both sides} & \textbf{\ours} \\
\midrule
Pairs (after length filter) & 60 (55) & 105 (99) & 507 (488) \\
Epochs & 3 & 2 & 2 \\
Learning rate & $2\times10^{-5}$ & $10^{-5}$ & $10^{-5}$ \\
Optimizer steps & 42 & 50 & 244 \\
Runtime (s) & 1{,}122 & 640 & 3{,}304 \\
Loss, first $\to$ last & 0.68 $\to$ 0.01 & 0.70 $\to$ 0.24 & 0.68 $\to$ 0.02 \\
Final margin / accuracy & 7.53 / 1.00 & 1.51 / 1.00 & 9.61 / 1.00 \\
\midrule
Policy and reference & \multicolumn{3}{c}{Qwen3.5-4B text tower} \\
$\beta$; batch & \multicolumn{3}{c}{0.1; $1\times4$} \\
LoRA & \multicolumn{3}{c}{rank 16, $\alpha=32$, dropout 0.05, all linear layers} \\
Software, hardware & \multicolumn{3}{c}{TRL 1.13, PEFT 0.20; one GPU} \\
Limits (\ours) & \multicolumn{3}{c}{prompt $\le$ 5{,}000, response $\le$ 1{,}500 tokens} \\
Schedule (\ours) & \multicolumn{3}{c}{cosine, 10\% warm-up, bf16, seed 17} \\
\bottomrule
\end{tabular}}
\caption{DPO configuration and results of the three runs.}
\label{tab:dpo-config}
\end{table}

\begin{table}[H]
\centering
\footnotesize
\begin{tabularx}{\linewidth}{@{}>{\raggedright\arraybackslash}p{0.19\linewidth}>{\raggedright\arraybackslash}X@{}}
\toprule
\textbf{Item} & \textbf{Setting} \\
\midrule
Tasks & 48 per round; weight 3 if a violation was seen, else 1 \\
Rollouts & $G=6$ per task and arm at temperature 1.0 (rounds 9--12: 4 explicit, 8 implicit) \\
Reward & $S(\tau)\cdot\mathbf{1}[\prop_C(\tau)]$ by the final checker \\
Advantage & Reward minus the group mean; zero-variance groups dropped \\
Samples & Every assistant turn; prompt $\le$ 6{,}000, response $\le$ 2{,}000 tokens \\
Loss & $-A\cdot\sum\log\pi_\theta$ over trajectory tokens; one pass; no clipping, no KL \\
Optimizer & AdamW $2\times10^{-6}$, $\beta=(0.9,0.99)$, no weight decay, 16 accumulation steps, clipping 1.0 \\
Adapter & One cumulative LoRA (rank 16, $\alpha=32$), merged every round \\
\bottomrule
\end{tabularx}
\caption{GRPO configuration.}
\label{tab:grpo-config}
\end{table}

\begin{table*}[t]
\centering
\small
\setlength{\tabcolsep}{4.5pt}
\begin{tabular}{@{}r rrrrrr cccccc@{}}
\toprule
 & & & & & & & \multicolumn{6}{c}{\textbf{Mean training reward}} \\
\cmidrule(lr){8-13}
 & & & & & & & \multicolumn{3}{c}{Explicit} & \multicolumn{3}{c}{Implicit} \\
\cmidrule(lr){8-10}\cmidrule(lr){11-13}
\textbf{Round} & \textbf{Runs} & \textbf{Groups} & \textbf{Kept} & \textbf{Samples} & \textbf{Steps} & \textbf{Long turns} & Py-P1 & Py-P4 & Go-P1 & Py-P1 & Py-P4 & Go-P1 \\
\midrule
1 & 556 & 96 & 34 & 840 & 53 & 166 & 0.96 & 0.76 & 0.87 & 0.40 & 0.84 & 0.20 \\
2 & 576 & 96 & 26 & 768 & 48 & 158 & 0.96 & 0.77 & 0.98 & 0.33 & 0.75 & 0.11 \\
3 & 576 & 96 & 13 & 472 & 30 & 113 & 0.97 & 0.85 & 1.00 & 0.30 & 0.83 & 0.00 \\
4 & 571 & 96 & 21 & 492 & 31 & 10 & 0.97 & 0.78 & 0.93 & 0.16 & 0.71 & 0.20 \\
5 & 580 & 96 & 23 & 923 & 58 & 138 & 0.98 & 0.67 & 0.93 & 0.28 & 0.64 & 0.14 \\
6 & 575 & 96 & 21 & 625 & 40 & 29 & 0.98 & 0.69 & 0.99 & 0.15 & 0.75 & 0.08 \\
7 & 566 & 96 & 22 & 599 & 38 & 89 & 0.97 & 0.79 & 0.91 & 0.42 & 0.76 & 0.26 \\
8 & 576 & 96 & 21 & 552 & 35 & 33 & 0.97 & 0.69 & 0.87 & 0.17 & 0.87 & 0.11 \\
9 & 602 & 93 & 20 & 992 & 62 & 299 & 0.99 & 0.45 & 0.90 & 0.32 & 0.72 & 0.15 \\
10 & 556 & 96 & 20 & 769 & 49 & 132 & 0.97 & 0.71 & 0.90 & 0.31 & 0.75 & 0.19 \\
11 & 575 & 96 & 26 & 1036 & 65 & 164 & 0.98 & 0.77 & 1.00 & 0.18 & 0.75 & 0.17 \\
12 & 574 & 96 & 18 & 768 & 48 & 164 & 0.99 & 0.83 & 0.96 & 0.20 & 0.79 & 0.09 \\
\midrule
Total & 6{,}883 & 1{,}149 & 265 & 8{,}836 & 557 & 1{,}495 & \multicolumn{6}{c}{} \\
\bottomrule
\end{tabular}
\caption{GRPO per round. Kept groups have reward variance, and long turns exceed the length limits.}
\label{tab:grpo-rounds}
\end{table*}

\subsection{Training Curve}
\label{app:dpo-curve}
Figure~\ref{fig:app-dpo-curve} plots the logged loss, reward margin, and preference accuracy of the three runs. \ours fits its pairs within the first epoch, with a mean loss of 0.087 and an accuracy of 0.96 over steps 62--122 and 0.010 and 1.00 afterwards, whereas the two-sided pilot does not saturate and ends with a margin of 1.51.

\subsection{GRPO}
\label{app:grpo}

GRPO runs \textbf{12} rounds with the configuration of Table~\ref{tab:grpo-config}.

Table~\ref{tab:grpo-rounds} and Figure~\ref{fig:app-grpo-strata} trace every round. Explicit-budget Python run tasks are saturated from the first round, so nearly all gradient comes from the implicit arm, whose Python run tasks stay between 0.15 and 0.42 without a trend. Only \textbf{301} of \textbf{1{,}136} groups carry a signal. As Table~\ref{tab:grpo-groups} shows, most implicit groups contain no proportionate trajectory, and more so as trajectories grow longer.

A round takes 1.8--2.5 hours (sampling 60--105 minutes, update 30--75 minutes), and sampling is bound by containers and CPUs. Updates take 9.5 hours over the 12 rounds and the run about 26 GPU hours in total.

\begin{table}[H]
\centering
\small
\setlength{\tabcolsep}{4pt}
\adjustbox{max width=\linewidth}{\begin{tabular}{@{}llrrr@{}}
\toprule
\textbf{Arm} & \textbf{Stratum} & \textbf{Groups} & \textbf{Identical} & \textbf{All zero} \\
\midrule
\multirow{5}{*}{Explicit} & Python run & 348 & 290 & 0 \\
 & Python edit+verify & 81 & 23 & 2 \\
 & Go run & 134 & 116 & 6 \\
 & Go edit+verify & 5 & 1 & 0 \\
 & \textit{All} & 568 & 430 (76\%) & 8 (1\%) \\
\midrule
\multirow{5}{*}{Implicit} & Python run & 348 & 262 & 208 \\
 & Python edit+verify & 81 & 19 & 0 \\
 & Go run & 134 & 122 & 111 \\
 & Go edit+verify & 5 & 2 & 0 \\
 & \textit{All} & 568 & 405 (71\%) & 319 (56\%) \\
\bottomrule
\end{tabular}}

\medskip
\adjustbox{max width=\linewidth}{\begin{tabular}{@{}lrr@{}}
\toprule
\textbf{Implicit, median length} & \textbf{Groups} & \textbf{All zero} \\
\midrule
$\le$4 commands & 198 & 52 (26\%) \\
5--8 commands & 250 & 184 (74\%) \\
9--15 commands & 107 & 72 (67\%) \\
$>$15 commands & 13 & 11 (85\%) \\
\bottomrule
\end{tabular}}
\caption{GRPO groups with identical rewards over the 12 rounds. Groups have at least two rollouts after dropping environment-blocked runs.}
\label{tab:grpo-groups}
\end{table}

\begin{table*}[t]
\centering
\small
\setlength{\tabcolsep}{5pt}
\adjustbox{max width=\textwidth}{\begin{tabular}{@{}l l r r r r r r@{}}
\toprule
\textbf{Stratum} & \textbf{Policy} & \textbf{Over (\%)} & \textbf{Done (\%)} & \textbf{Submit-only} & \textbf{Full suite} & \textbf{Med.\ cmds} & \textbf{Med.\ s} \\
\midrule
\multirow{3}{*}{Explicit, Python run} & Base & 3.1 & 100.0 & 0 & -- & 2 & 31 \\
 & \ours & 1.8 & 95.5 & 6 & -- & 2 & 19 \\
 & GRPO r12 & 0.9 & 92.8 & 8 & -- & 2 & 20 \\
\midrule
\multirow{3}{*}{Explicit, Python edit+verify} & Base & 11.5 & 89.4 & 0 & -- & 5 & 35 \\
 & \ours & 17.3 & 84.6 & 0 & -- & 5 & 20 \\
 & GRPO r12 & 11.5 & 75.0 & 0 & -- & 4 & 25 \\
\midrule
\multirow{3}{*}{Explicit, Go run} & Base & 12.5 & 100.0 & 0 & -- & 3 & 47 \\
 & \ours & 0.0 & 100.0 & 0 & -- & 2 & 31 \\
 & GRPO r12 & 0.0 & 100.0 & 0 & -- & 2 & 33 \\
\midrule
\multirow{3}{*}{Implicit, Python run} & Base & 89.3 & 74.1 & 0 & 80 & 6 & 930 \\
 & \ours & 68.8 & 81.2 & 0 & 77 & 7 & 330 \\
 & GRPO r12 & 73.2 & 87.5 & 0 & 39 & 6 & 582 \\
\midrule
\multirow{3}{*}{Implicit, Python edit+verify} & Base & 21.2 & 98.1 & 0 & 0 & 5.5 & 25 \\
 & \ours & 23.1 & 90.4 & 0 & 0 & 4 & 16 \\
 & GRPO r12 & 34.6 & 100.0 & 0 & 0 & 8.5 & 21 \\
\midrule
\multirow{3}{*}{Implicit, Go run} & Base & 85.7 & 89.3 & 1 & 23 & 7.5 & 187 \\
 & \ours & 85.7 & 96.4 & 0 & 24 & 4 & 148 \\
 & GRPO r12 & 85.7 & 100.0 & 0 & 12 & 5 & 121 \\
\bottomrule
\end{tabular}}
\caption{Test results by stratum. GRPO round 12 is evaluated with half the samples.}
\label{tab:strata}
\end{table*}

\subsection{DPO at Random Turns}
\label{app:random-turn}
The control is identical to \ours except that each over-defense pair is moved to a turn drawn uniformly from the same trajectory, where the alternative is resampled under the same acceptance rule. Skipped-check pairs, length filter, and schedule are unchanged. After the length filter, 488 pairs remain, of which the 300 over-defense pairs are moved. The rejected action is a violation under the final checker in 64 of the moved pairs, and 29 of the drawn turns coincide with the over-defense fork.

\subsection{Evaluation Protocol}
\label{app:eval}
All policies run on the \textbf{48} test run and edit-and-verify tasks with the same harness at temperature 1.0: $k=8$ explicit and $k=4$ implicit runs per task for the base model, \ours, and DPO, and $k=4$ and $k=2$ for the GRPO checkpoints of rounds 0, 4, 8, and 12. Explicit-budget runs take about a minute and rarely violate the budget, so this arm gets more runs per task than the implicit arm, whose runs take two to five minutes. Every run is labeled by the final checker. Confidence intervals resample the 48 tasks with replacement 10{,}000 times (seed 0), recompute the rate difference over all runs of the resampled tasks, and take the 2.5th and 97.5th percentiles.

\section{Additional Results}
\label{app:results}
This section breaks the test results down by stratum, repository, and violation form, reports confidence intervals, and describes the earlier round that compares one-sided and two-sided pairs.

\subsection{Test Results by Stratum}
\label{app:strata}
The reduction under \ours sits in implicit-budget Python run tasks, where the median run also drops from 930 to 330 seconds, and in explicit-budget Go run tasks, as Table~\ref{tab:strata} shows.

\subsection{Test Results by Repository}
\label{app:by-repo}
The implicit-budget reduction comes from feast and garak, and the explicit-budget one from restic, whose repeated Go runs disappear. The rise of explicit-budget violations in garak, from 7.8\% to 10.5\%, comes from its edit-and-verify tasks, where out-of-scope edits double from 7 to 14, as Table~\ref{tab:by-repo} shows.

\subsection{Violation Forms}
\label{app:flags}
As Table~\ref{tab:flags} shows, \ours removes almost every explicit-budget target rerun, from 14 to 1, while out-of-scope edits, which no fork pair contrasts, rise from 7 to 14. Implicit-budget runs beyond the test-run threshold fall from 54 to 14, whereas full-suite runs stay at 103 and 101.

\subsection{Confidence Intervals}
\label{app:bootstrap}
Only the implicit-budget reduction under \ours excludes zero, overall and on Python run tasks, where 11 of 28 tasks improve and none worsens, as Table~\ref{tab:bootstrap} shows.

\subsection{One-Sided and Two-Sided Pairs}
\label{app:pilot}
Table~\ref{tab:pilot-setup} describes the earlier round that compares one-sided and two-sided pairs.

\begin{tcolorbox}[appbox, title={What the earlier round shows}]
\small\raggedright
\textbf{One-sided pairs:} \textit{9.2\% of the runs on the three test sets consist of nothing but the submission, for example on \texttt{g35-garak-test-base}. Fourteen training-distribution edit-and-verify runs land the edit but skip the required check. On the guardrail set two runs add guard code (eight guard lines in a rewrite of \texttt{src/streamlink/utils/args.py}, one in smart\_open) and one parameter task fails its oracle.}\\[2pt]
\textbf{Two-sided pairs:} \textit{no submit-only run, completion at the base level on every set, median run 36--50\% shorter than the base.}\\[2pt]
\textbf{Guardrails under \ours:} \textit{all 34 guardrail runs pass their oracle with no guard markers, as for the base model.}
\end{tcolorbox}

\begin{table}[H]
\centering
\small
\setlength{\tabcolsep}{3.5pt}
\adjustbox{max width=\linewidth}{\begin{tabular}{@{}l c rr rr@{}}
\toprule
 & & \multicolumn{2}{c}{\textbf{Over (\%)}} & \multicolumn{2}{c}{\textbf{Done (\%)}} \\
\cmidrule(lr){3-4}\cmidrule(lr){5-6}
\textbf{Repository} & \textbf{Arm} & Base & \ours & Base & \ours \\
\midrule
\multirow{2}{*}{garak (24)} & A & 7.8 & 10.5 & 94.8 & 88.0 \\
 & B & 62.5 & 53.1 & 80.2 & 81.2 \\
\multirow{2}{*}{troposphere (1)} & A & 0.0 & 0.0 & 100.0 & 100.0 \\
 & B & 0.0 & 0.0 & 100.0 & 100.0 \\
\multirow{2}{*}{feast (16)} & A & 3.1 & 1.6 & 99.2 & 97.7 \\
 & B & 79.7 & 59.4 & 82.8 & 87.5 \\
\multirow{2}{*}{restic (7)} & A & 12.5 & 0.0 & 100.0 & 100.0 \\
 & B & 85.7 & 85.7 & 89.3 & 96.4 \\
\bottomrule
\end{tabular}}
\caption{Test results by repository. Task counts are in parentheses.}
\label{tab:by-repo}
\end{table}

\begin{table}[H]
\centering
\small
\setlength{\tabcolsep}{3.5pt}
\adjustbox{max width=\linewidth}{\begin{tabular}{@{}l rr rrrr@{}}
\toprule
 & & & \multicolumn{4}{c}{\textbf{GRPO round}} \\
\cmidrule(l){4-7}
\textbf{Violation form} & \textbf{Base} & \textbf{\ours} & 0 & 4 & 8 & 12 \\
\midrule
\multicolumn{7}{@{}l}{\textit{Explicit}} \\
Target rerun & 14 & 1 & 0 & 0 & 1 & 0 \\
Runs beyond $n$ & 0 & 2 & 1 & 1 & 0 & 0 \\
Off-target test & 0 & 0 & 0 & 0 & 0 & 0 \\
Subset recheck & 0 & 0 & 0 & 0 & 0 & 0 \\
Extra check & 4 & 3 & 2 & 2 & 3 & 1 \\
File write & 3 & 2 & 1 & 1 & 2 & 1 \\
Out-of-scope edit & 7 & 14 & 5 & 4 & 6 & 5 \\
Rollback comparison & 1 & 2 & 2 & 0 & 2 & 0 \\
\midrule
\multicolumn{7}{@{}l}{\textit{Implicit}} \\
Full suite & 103 & 101 & 52 & 51 & 49 & 51 \\
Test runs $>\max(2,n{+}1)$ & 54 & 14 & 9 & 10 & 9 & 12 \\
Extra check & 11 & 2 & 3 & 6 & 2 & 2 \\
File write & 16 & 5 & 7 & 7 & 5 & 5 \\
Out-of-scope edit & 9 & 12 & 3 & 9 & 4 & 9 \\
Rollback comparison & 0 & 0 & 0 & 0 & 0 & 0 \\
\bottomrule
\end{tabular}}
\caption{Test runs with each violation form. A run can carry several forms.}
\label{tab:flags}
\end{table}

\begin{table}[H]
\centering
\small
\setlength{\tabcolsep}{4pt}
\adjustbox{max width=\linewidth}{\begin{tabular}{@{}l l r c@{}}
\toprule
\textbf{Comparison} & \textbf{Subset} & \textbf{$\Delta$ (pp)} & \textbf{95\% CI} \\
\midrule
\ours $-$ Base & Implicit, all & $-11.4$ & $[-19.8,\,-3.6]$ \\
\ours $-$ Base & Implicit, Python run & $-20.5$ & $[-32.1,\,-10.7]$ \\
\ours $-$ Base & Explicit, all & $-1.1$ & $[-4.7,\,+3.1]$ \\
GRPO r12 $-$ r0 & Implicit, all & $+5.2$ & $[-2.1,\,+12.5]$ \\
GRPO r12 $-$ r0 & Explicit, all & $-1.5$ & $[-4.7,\,+1.1]$ \\
\bottomrule
\end{tabular}}
\caption{Task-level bootstrap of the rate change.}
\label{tab:bootstrap}
\end{table}

\begin{table}[H]
\centering
\footnotesize
\begin{tabularx}{\linewidth}{@{}>{\raggedright\arraybackslash}p{0.24\linewidth}>{\raggedright\arraybackslash}X@{}}
\toprule
\textbf{Item} & \textbf{Setting} \\
\midrule
Task pool & 104 gated tasks (79 run, 10 edit+verify, 12 delete, 3 parameter); garak, troposphere, feast reserved for testing \\
Sampling & Explicit arm of 75 training tasks, temperature 1.0: 600 runs at $k=8$ and 72 at $k=4$ \\
Over-defense pairs & 60 (39 resampled, 21 sibling submissions) from 32 tasks; 55 after the length filter \\
Skipped-check pairs & 45 (37 run, 8 edit+verify), added in the two-sided run \\
Training & LoRA DPO, 3 epochs at $2\times10^{-5}$ for the one-sided run and 2 epochs at $10^{-5}$ for the two-sided run \\
Evaluation sets & Test run: 17 tasks, $k=4$. Test scope: 10 Python tasks from an earlier hand-written set whose instruction states the scope, $k=4$. Guardrail: the 15 delete and parameter tasks. Training distribution: the 24 tasks that produced pairs, $k=8$ (base) or $k=4$ \\
\bottomrule
\end{tabularx}
\caption{Setup of the earlier round.}
\label{tab:pilot-setup}
\end{table}

\section{Case Studies}
\label{app:cases}
Each box follows one silent fork of the 4B model on a training task, the frontier models at the same fork, and the same behavior on test tasks after training. Both cases are behaviors the pairs contrast. Commands are verbatim.

\begin{tcolorbox}[appbox, title={Case 1: A repeated target run that training removes (explicit budget)}]
\small\raggedright
\textbf{Task} (\texttt{g35-\allowbreak{}hydra-\allowbreak{}test-\allowbreak{}refresh-\allowbreak{}landscape-\allowbreak{}metadata}, training)\textbf{:} \textit{Just run \texttt{pytest tools/\allowbreak{}landscape/\allowbreak{}test\_\allowbreak{}refresh\_\allowbreak{}landscape\_\allowbreak{}metadata.py -\allowbreak{}q}, see whether it passes now, and tell me the result.}

\smallskip
\textbf{Trajectory} (base model)\\
\texttt{\$ pytest tools/\allowbreak{}landscape/\allowbreak{}test\_\allowbreak{}refresh\_\allowbreak{}landscape\_\allowbreak{}metadata.py -\allowbreak{}q} \hfill 5 passed in 0.74s\\
\texttt{\$ pytest tools/\allowbreak{}landscape/\allowbreak{}test\_\allowbreak{}refresh\_\allowbreak{}landscape\_\allowbreak{}metadata.py -\allowbreak{}q} \hfill 5 passed in 0.15s\\
\texttt{\$ echo COMPLETE\_\allowbreak{}TASK\_\allowbreak{}AND\_\allowbreak{}SUBMIT\_\allowbreak{}FINAL\_\allowbreak{}OUTPUT}

\smallskip
\textbf{Fork} (turn 1)\textbf{.} The 4B model's thinking ends with \textit{``I could run the pytest command again to verify the results, which would provide a fresh execution that I can report on''} and reruns. The first accepted of twelve resamples replies \textit{``The tests passed! All 5 tests passed, execution time 0.74 seconds, 100\% pass rate''} and submits. Both leave the tree unchanged, which is all the oracle checks.

\smallskip
\textbf{Frontier models at the fork:} \textit{GPT-6.1-sol, Opus-5.5, and DS-V4.1-Flash submit or report in all samples, and GLM-5.3 reruns in two of four.}

\smallskip
\textbf{After training:} \textit{on the test set, explicit target reruns fall from 14 runs to 1 under \ours.}
\end{tcolorbox}

\begin{tcolorbox}[appbox, title={Case 2: Test runs beyond the implicit limit that training removes (implicit budget)}]
\small\raggedright
\textbf{Task} (\texttt{g35-\allowbreak{}llama-\allowbreak{}stac-\allowbreak{}test-\allowbreak{}api-\allowbreak{}models-\allowbreak{}x2}, training)\textbf{:} \textit{Run \texttt{uv run pytest tests/\allowbreak{}unit/\allowbreak{}conversations/\allowbreak{}test\_\allowbreak{}api\_\allowbreak{}models.py -\allowbreak{}q} a few times, see whether it is stable, and tell me the result.}

\smallskip
\textbf{Fork} (base model, turn 2)\textbf{.} The prefix has run the test three times, the implicit limit $\max(2,2+1)$:\\
\texttt{\$ uv run pytest tests/\allowbreak{}unit/\allowbreak{}conversations/\allowbreak{}test\_\allowbreak{}api\_\allowbreak{}models.py -\allowbreak{}q \&\& uv run pytest tests/\allowbreak{}unit/\allowbreak{}conversations/\allowbreak{}test\_\allowbreak{}api\_\allowbreak{}models.py -\allowbreak{}q} \hfill 3 passed, twice\\
\texttt{\$ uv run pytest tests/\allowbreak{}unit/\allowbreak{}conversations/\allowbreak{}test\_\allowbreak{}api\_\allowbreak{}models.py -\allowbreak{}q} \hfill 3 passed in 0.05s\\
The 4B model lists the three results, notes \textit{``all tests are passing consistently''}, continues \textit{``let me run one more time to confirm, then provide the summary''}, and runs the test a fourth time. The first accepted resample reports the three passes, concludes that the test is stable, and submits.

\smallskip
\textbf{Frontier models at the fork:} \textit{GPT-6.1-sol, Opus-5.5, and GLM-5.3 submit or report in all samples, and DS-V4.1-Flash loops the test five more times in two of four.}

\smallskip
\textbf{After training:} \textit{on the test task of the same shape, \texttt{g35-\allowbreak{}garak-\allowbreak{}test-\allowbreak{}intent-\allowbreak{}metadata-\allowbreak{}x2}, the base model is disproportionate in 3 of 4 runs, with five or six test runs against a limit of three, and \ours in none, with two commands each, so the median run drops from 30 to 13 seconds.}
\end{tcolorbox}

\section{Compute and Cost}
\label{app:compute}
Agent runs of the 4B model are bound by Docker containers and CPUs. Each run occupies one container, an explicit-budget run takes about a minute and an implicit-budget run two to five minutes, and runs proceed in parallel against vLLM servers. Table~\ref{tab:compute-local} lists the local compute of each stage.

Fork resampling draws continuations of up to 4{,}000 tokens, and 264 of the 317 final pairs use a resample. GRPO serves the model on three GPUs and shares a fourth between a serving replica and the update. A round takes 1.8--2.5 hours. Agent hours sum the wall-clock time of agent runs. The software is mini-swe-agent 2.4.6, vLLM 0.26, TRL 1.13, and PEFT 0.20~\citep{mangrulkar2022peft}.

The figure icons come from OpenMoji~\citep{openmoji} under CC BY-SA 4.0 and Fluent Emoji~\citep{fluentemoji} under MIT.

\begin{table}[H]
\centering
\small
\setlength{\tabcolsep}{4pt}
\begin{tabular}{@{}lrrr@{}}
\toprule
\textbf{Stage} & \textbf{GPUs} & \textbf{Volume} & \textbf{Time} \\
\midrule
Task gating & -- & 427 candidates & -- \\
Pair mining & 8 & 4{,}660 runs & 157.7 agent h \\
Fork resampling & 8 & 12 per fork & -- \\
\ours training & 1 & 244 steps & 0.9 h \\
DPO training & 1 & 244 steps & 0.9 h \\
GRPO & 4 & 6{,}883 rollouts & 26 GPU h \\
Test, Base & -- & 48 tasks & 37.0 agent h \\
Test, \ours & -- & 48 tasks & 25.7 agent h \\
Test, DPO & -- & 48 tasks & 33.6 agent h \\
Test, GRPO & -- & 4 $\times$ 48 tasks & -- \\
\bottomrule
\end{tabular}
\caption{Local compute per stage. Task gating runs in CPU containers, and GRPO is tested at four rounds.}
\label{tab:compute-local}
\end{table}

\begin{table}[H]
\centering
\small
\setlength{\tabcolsep}{3.5pt}
\adjustbox{max width=\linewidth}{\begin{tabular}{@{}l c r rrrr@{}}
\toprule
 & & & \multicolumn{4}{c}{\textbf{Fork probe tokens}} \\
\cmidrule(l){4-7}
\textbf{Model} & \textbf{Effort} & \textbf{Max output} & Input & Cached & Write & Output \\
\midrule
GPT-6.1-sol & high & 128{,}000 & 2.18M & 1.29M & 0 & 0.25M \\
Opus-5.5 & high & 128{,}000 & 4.03M & 3.01M & 1.02M & 0.36M \\
DS-V4.1-Flash & high & 393{,}216 & 2.80M & 1.73M & 0 & 0.35M \\
GLM-5.3 & high & 131{,}072 & 2.31M & 1.66M & 0 & 0.15M \\
\midrule
All & & & 11.32M & 7.69M & 1.02M & 1.10M \\
\bottomrule
\end{tabular}}
\caption{API settings and fork-probe token usage per model. Cached tokens are part of the input.}
\label{tab:compute-api}
\end{table}

\textbf{Model APIs.} The fork probe and the end-to-end agents call the four frontier models with reasoning effort high. Table~\ref{tab:compute-api} lists the settings and token usage of the fork probe, and Table~\ref{tab:compute-e2e} the token usage of the end-to-end agents. Only the Opus-5.5 API reports cache-write tokens, so the Write columns are zero for the other models. The end-to-end agents run in the task images with 50 steps. The fork probe costs US\$19.98.

\begin{table}[H]
\centering
\small
\setlength{\tabcolsep}{3.5pt}
\adjustbox{max width=\linewidth}{\begin{tabular}{@{}l rrrrr@{}}
\toprule
\textbf{Model} & \textbf{Input} & \textbf{Cached} & \textbf{Write} & \textbf{Output} & \textbf{Reasoning} \\
\midrule
GPT-6.1-sol & 8.18M & 6.12M & 0 & 0.24M & 0.06M \\
Opus-5.5 & 5.88M & 4.14M & 1.73M & 0.41M & 0.06M \\
DS-V4.1-Flash & 32.60M & 27.46M & 0 & 1.12M & 0.48M \\
GLM-5.3 & 6.04M & 5.00M & 0 & 0.23M & 0.05M \\
\midrule
All & 52.70M & 42.72M & 1.73M & 2.01M & 0.65M \\
\bottomrule
\end{tabular}}
\caption{Token usage of the end-to-end agents per model. Reasoning tokens are part of the output.}
\label{tab:compute-e2e}
\end{table}

\end{document}